\newcommand{\D}{D}
\newcommand{\rad}{\ensuremath{\operatorname{rad}}}
\newcommand{\R}{\ensuremath{\mathbb{R}}}
\renewcommand{\SS}{\ensuremath{\mathcal{S}}}
\newcommand{\OO}{\ensuremath{\mathcal{O}}}
\newcommand{\cc}{\ensuremath{\mathcal{C}}}
\newcommand{\cfree}{\ensuremath{\mathcal{C}^{free}}}
\newcommand{\ccol}{\ensuremath{\mathcal{C}^{col}}}
\newcommand{\afree}{\ensuremath{a\mathcal{C}^{free}}}
\newcommand{\acol}{\ensuremath{a\mathcal{C}^{col}}}
\newcommand{\sfree}{\ensuremath{Sl^{free}}}
\newcommand{\scol}{\ensuremath{Sl^{col}}}
\newcommand{\asfree}{\ensuremath{aSl^{free}}}
\newcommand{\ascol}{\ensuremath{aSl^{col}}}
\DeclareMathOperator{\rot}{R}
\DeclareMathOperator{\dsign}{d_{s}}
\newtheorem{definition}{Definition}
\newtheorem{proposition}{Proposition}
\newtheorem{corollary}{Corollary}
\newtheorem{obs}{Observation}
\newtheorem{rmk}{Remark}
\newcommand\BibTeX{{\rmfamily B\kern-.05em \textsc{i\kern-.025em b}\kern-.08em
T\kern-.1667em\lower.7ex\hbox{E}\kern-.125emX}}
\begin{document}

\runninghead{Free Space of Rigid Objects}

\title{Free Space of Rigid Objects: \\ Caging, Path Non-Existence, and Narrow Passage Detection}

\author{Anastasiia Varava*\affilnum{1}, J. Frederico Carvalho*\affilnum{1}, Danica Kragic\affilnum{1}, and Florian T. Pokorny\affilnum{1}}

\affiliation{*The first two authors contributed equally. \\
\affilnum{1}Robotics, Perception, and Learning,  KTH Royal Institute of Technology, \\ Sweden}

\corrauth{Anastasiia Varava}
\email{\{varava\}@kth.se}

\begin{abstract}
In this work we propose algorithms to explicitly construct a conservative estimate of the configuration spaces of rigid objects in 2D and 3D.
Our approach is able to detect compact path components and narrow passages in configuration space which are important for applications in robotic manipulation and path planning.
Moreover, as we demonstrate, they are also applicable to identification of molecular cages in chemistry.
Our algorithms are based on a decomposition of the resulting 3 and 6 dimensional configuration spaces into slices corresponding to a finite sample of fixed orientations in configuration space.
We utilize dual diagrams of unions of balls and uniform grids of orientations to approximate the configuration space.
We carry out experiments to evaluate the computational efficiency on a set of objects with different geometric features thus demonstrating that our approach is applicable to different object shapes.
We investigate the performance of our algorithm by computing increasingly fine-grained approximations of the object's configuration space.
A multithreaded implementation of our approach is shown to result in significant speed improvements.
\end{abstract}
\keywords{ grasping, path planning for manipulators, manipulation planning}

\maketitle

\section{Introduction}
\label{intro}

A basic question one may ask about a rigid object is to what extent it
can be moved relative to other rigidly fixed objects in the environment. In robotic
manipulation, this has lead to the notion of a \emph{cage}. An object is
considered caged by rigid fixtures, or a robotic manipulator, if it cannot be moved
arbitrarily far from its initial configuration. In terms of the configuration space of
the rigid object, this is equivalent to being able to answer
whether an object is located in a configuration contained in a bounded path
component of its configuration space. Similarly, in fields such as chemistry and
biology, this notion of a cage is a useful basic concept for predicting how molecules can
restrict each other's mobility which has important applications to drug delivery and related
problems~\citep{Mitra2013,rother}.

The main challenge in caging verification is that the configuration space of a rigid
object in 3D is in general a 6 dimensional subset of SE(3). For this reason, explicit reconstruction
of the configuration space in terms of higher-dimensional analogues of discretization
techniques such as voxel-grids and triangulations has been considered a computationally
infeasible approach to this problem~\citep{makita-survey}. Past work has instead focused on analyzing caging
configurations either for 2D objects only~\cite{mccarthy}, or for specific 3D objects with special
geometric properties, such as handles and narrow parts~\citep{pokorny, varava}, which allow one to avoid modelling the configuration space directly.

\begin{figure*}[htb!]
\centering
\includegraphics[width=01\textwidth]{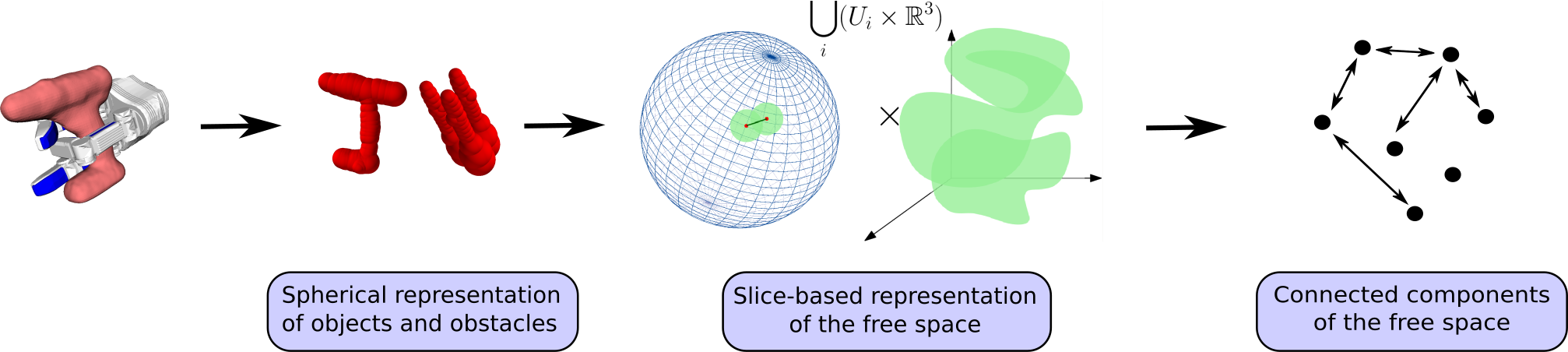}
\caption{\label{fig::teaser} Diagram of our method. We approximate the collision space of an object by choosing a finite set of fixed object's orientations and considering the corresponding slices of the collision space to $\R^n$ ($n \in \{2, 3\}$). From the collision space slices we compute approximations of the free space slices. Finally, we analyze the connectivity between neighboring slices to get an approximation of the connected components of the entire free space.}
\end{figure*}

We study caging as a special case of proving path non-existence between a pair of configurations.
To show that two configurations are disconnected, we construct an approximation of the object's collision space. Intuitively, we construct a set of slices of the object's collision space to subspaces corresponding to fixed orientations of the object, see Fig.~\ref{fig::teaser}. We then compute the free space approximation as the complement to the collision space of each slice. By construction, our collision space approximation is a proper subset of the real collision space, which implies that when our algorithm reports that the two configurations are not path-connected, then there is no path between them in the real free space. However, for the same reason, our algorithm is not guaranteed to find all  possible caging configurations, since we do not reconstruct the entire collision space.

The key contribution of the work we present here is to show that \emph{it is possible to compute explicit approximations of configuration
spaces of generic rigid bodies in 3D relatively efficiently,} while maintaining provability guarantees with respect to
reasoning about caging and, more generally, path existence. 
Our technique for constructing such an approximation
 is based on the following key insights:
\begin{itemize}
    \item \textbf{Representation:} We utilize a union-of-balls based object and obstacles representation
        that allows one to utilize dual diagrams to approximate the free space in a
        provably correct manner.
    \item \textbf{Slicing:} We use an approximate configuration space decomposition based on
        locally fixed orientations.
    \item \textbf{Object shrinking:} We use a subset of the object (its $\varepsilon-$core), which makes it possible to construct a sliced-based approximation of its configuration space.
    \item \textbf{Parallelization:} Our approach allows for parallel slice computation,
        leveraging modern CPU architectures.
\end{itemize}

The presented work constitutes an extended version of our initial work at WAFR'18~\citep{varavaWAFR}. It features an revised introduction and algorithms description, as well as extended experimental results, a parallelized version of the presented algorithm and its evaluation on 3D models of real objects.

\section{Related Work}\label{related-work}

In manipulation, caging can be considered as an alternative to a force-closure grasp~\citep{makita,makita2,pokorny,varava}, as well as an intermediate step on the way towards a form-closure grasp~\citep{rodriguez}. Unlike classical grasping, caging can be formulated as a purely geometric problem, and therefore one can derive sufficient conditions for an object to be caged. To prove that a rigid object is caged, it is enough to prove this for any subset (part) of the object. This allows one to consider caging a subset of the object instead of the whole object, and makes caging robust to noise and uncertainties arising from shape reconstruction and position detection.

The notion of a planar cage was initially introduced by \cite{kuperberg} as a set of $n$ points lying in the complement of a polygon and preventing it from escaping arbitrarily far from its initial position. In robotics, it was subsequently studied in the context of point-based caging in 2D by \cite{rimon,sudsang_polygons,vahedi}, and others.  A similar approach has also been adopted for caging 3D objects. For instance, \cite{sudsang_polytopes} proposed an algorithm for computing all two-finger cages for non-convex polytopes. \cite{pereira} and \cite{wang}  present a set of 2D caging-based algorithms enabling a group of mobile robots to cooperatively drag a trapped object to the desired goal.

In the above mentioned works fingertips are represented as points or spheres. Later, more complex shapes of caging tools were taken into account by \cite{pokorny,stork2013b,varava,makita,makita2}. In these works, sufficient conditions for caging were derived for objects with particular shape features. \cite{makapunyo} proposed a heuristic metric for partial caging based on the length and curvature of escape paths generated by a motion planner. The authors suggested that configurations that allow only rare escape motions may be used as cages in practice.

We address caging as a special case of the path non-existence problem: an object is caged if there is no path leading it to an unbounded path-connected component.
The problem of proving path non-existence has been addressed by \cite{basch} in the context of motion planning, motivated by the fact that most modern sampling-based planning algorithms do not guarantee that two configurations are disconnected, and rely on stopping heuristics in such situations~\citep{latombe}.
\cite{basch} provide an algorithm to prove that two configurations are disconnected when the object is `too big' or `too
long' to pass through a `gate' between them. There are also some related results on approximating configuration spaces of 2D objects. \cite{zhang} use approximate cell decomposition and prove path non-existence for 2D rigid objects. They decompose a configuration space into a set of cells and for each cell decide if it lies in the collision space. \cite{mccarthy} propose a related approach. There, they randomly sample the configuration space of a planar rigid object and reconstruct its approximation as an alpha complex. They later use it to check the connectivity between pairs of configurations. This approach has been later extended to planar energy-bounded caging by \cite{mahler}.

The problem of explicit construction (either exact or approximate) of configuration spaces has been studied for several decades in the context of motion planning, and a summary of early results can be found in the survey by~\cite{wise}. \cite{lozano-perez} introduced the idea of slicing along the rotational axis. To connect two consecutive slices, the authors proposed to use the area swept by the robot rotating between two consecutive orientation values. \cite{zhu} extended this idea and used both outer and inner swept areas to construct a subset and a superset of the collision space of polygonal robots. The outer and inner swept areas are represented as generalized polygons defined as the union and intersection of all polygons representing robot's shape rotating in a certain interval of orientation values, respectively. 
Several recent works propose methods for exact computation of configuration spaces of planar objects~\citep{behar,milenkovic}. \cite{behar} proposed a method towards exact computation of the boundary of the collision space. \cite{milenkovic} explicitly compute the free space for complete motion planning. 

Thus, several approaches to representing configuration spaces of 2D objects, both exact and approximate, have been proposed and successfully implemented in the past. The problem is however more difficult if we consider a 3D object, as its configuration space is 6-dimensional.  In the recent survey on caging by \cite{makita-survey}, the authors hypothesise that recovering a 6D configuration space and understanding caged subspaces is computationally infeasible. To the best of our knowledge, our paper presents the first practical and provably-correct method to approximate a 6D configuration space.

Our approximation is computed by decomposing the configuration space into a finite set of lower dimensional slices. Although the idea of slicing  is not novel and was introduced by \cite{lozano-perez}, recent advances in computational geometry and topology, as well as a significant increase in processing power, have made it possible to approximate a 6D configuration space on a common laptop.
We identify two main challenges to slicing a 6D configuration space of a rigid object:
how to quickly compute 3D slices of the free space, and
how to efficiently discretize the orientation space.
For slice approximation, our method relies on the fact that the collision space associated to a rigid  object with a fixed orientation and an obstacle represented as a union of balls is itself a union of balls. Then we use the dual diagram to a union of balls presented by~\cite{edelsbrunner-skin} as an approximation  of the free space of the slice. This way, we do not need to use generalized polygons, which makes previous approaches more difficult in 2D and very hard to generalize to 3D workspaces. To discretize $SO(3)$, we use the method by \cite{yershova}, which provides a uniform grid representation of the space. The confluence of these factors results in overcoming the dimensionality problem without losing necessary information about the topology of the configuration space, and achieving both practical feasibility and theoretical guarantees at the same time.

Finally,  our method does not require precise information about the shape of objects and obstacles, and the only requirement is that balls must be located strictly inside them, which makes our approach more robust to noisy and incomplete sensor data.

We implemented our algorithm to approximate 3D and 6D configuration spaces and verify that it has in practice a reasonable runtime on a single core of Intel Core i7 processor. We provide a parallel implementation which makes use of modern parallel CPU architectures and investigate the effect of parallelization on the runtime of the algorithm.

\section{Definitions and Notation}
\subsection{Objects and obstacles}
For the sake of generality, in this paper we use the terms `object' for objects and autonomous rigid robots (e.g., disc robots) moving in $n$-dimensional workspaces, where $n \in \{2, 3\}$. Similarly, we use the term `obstacle' for everything that restricts mobility of the object -- e.g., manipulators, walls, other rigidly fixed objects, etc.

When formally defining an object and a set of obstacles, we make a few mild assumptions to define a large class of shapes and include most of the regular objects in the real world. 
Since we want to represent both the object and the obstacles as a set of $n-$dimensional balls, we do not allow them to have `thin parts'. Formally, we assume that they can be represented as \emph{regular sets}~\citep{rodriguez-path}:

\begin{definition}
\label{dfn::regular_set}
A set $U$ is \emph{regular} if it is equal to the closure of its interior: 
$U = \operatorname{cl}(\operatorname{int}(U))$.
\end{definition}

In the above definition, the interior of $U$ is the largest open set contained in $U$, and the closure of $\operatorname{int}(U)$ is the smallest closed set containing $\operatorname{int}(U)$. In this paper, we assume that both the object and the obstacles  are regular sets.
Now, we define an object and a set of obstacles as follows: 

\begin{definition}\label{dfn::object}
A \emph{rigid object} is a regular compact connected non-empty subset of $\mathbb{R}^n$.
A \emph{set of obstacles}  is a regular compact  non-empty subset of $\mathbb{R}^n$.
\end{definition}

We approximate both the obstacles, $\SS$ and the object, $\OO$ as unions of balls which lie in their interior, that is, \mbox{$\SS=\{B_{R_1}(X_1),\ldots,B_{R_n}(X_n)\}$}  and \mbox{$\OO= \{B_{r_1}(Y_1), \ldots, B_{r_m}(Y_m)\}$} with radii $R_1, \ldots, R_n$ and $r_1, \ldots, r_m$ respectively. 

Let $\cc(\OO) = SE(n)$ denote the configuration space of the object. We define its \emph{collision space}\footnote{A theoretical analysis of different ways to define the free space can be found in~\cite{rodriguez-path}.} $\ccol(\OO)$ as the set of the objects configurations in which the object penetrates the obstacles:

\begin{definition}
\label{dfn::collision_space}
    $\ccol(\OO) = \{c \in \cc~|~[\operatorname{int} c(\OO)] \cap [\operatorname{int} \SS] \neq \emptyset\}$, where $c(\OO)$ denotes the object in a configuration $c$.
    The \emph{free space} $\cfree(\OO)$ is the complement of the collision space: $\cfree(\OO) = \cc(\OO) - \ccol(\OO)$.
\end{definition}

Note that this definition allows the object to be in contact with the obstacles.

\begin{definition}
Two configurations $c_1$ and $c_2$ are called \emph{path-connected} if there exists a continuous collision-free path $\gamma: [0, 1] \to \cfree(\OO)$ between them: $\gamma(0) = c_1$, $\gamma(1) = c_2$.
Two configurations are path-connected if and only if they belong to the same \emph{path-connected component}.
\end{definition}

To compute path-connected components of the free space, we decompose the free space into a set of $n$-dimensional slices.

\subsection{Slice-based representation of the C-space}
In our previous work~\citep{varava_2}, we suggested that configuration space decomposition may be a more computationally efficient alternative to its direct construction. We represent the configuration space of an object as a product $\cc(\OO) = \R^{n} \times SO(n)$, and consider a finite covering of $SO(n)$ by open sets (this is always possible, since $SO(n)$ is compact): $SO(n) = \bigcup_{i \in \{1,\ldots,s\}}U_i$. We recall the notion of a slice~\citep{varava_2}:

\begin{definition}
A \emph{slice} of the configuration space $\cc(\OO)$ of a rigid object, is a subset of $\cc(\OO)$ defined as follows:
$Sl_U(\OO) = \R^n \times U$,
where $U$ is an subset of $SO(n)$.
\end{definition} 

We denote a slice of the collision (free) space by $\scol_U(\OO)$ ($\sfree_U(\OO)$, respectively). For each slice, we construct an approximation $\ascol_U(\OO)$ of its collision space in such a way that our approximation lies inside the real collision space of the slice, $\ascol_U(\OO) \subset \scol_U(\OO) $.

This way, we approximate the entire collision space by a subset $\acol(\OO)$: 

{
\[
    \acol (\OO)= \left(\bigcup_{i \in \{1,\ldots,s\}} \ascol_{U_i}(\OO) \right)\subset \ccol(\OO)
\]
}

Now, we discuss how to construct slice approximations.

\subsection{An $\varepsilon-$core of the object}
First of all observe that by Def.~\ref{dfn::collision_space}, if a subset $\OO'$ of an object $\OO$ placed in configuration $c \in \cc(\OO)$ is in collision, then the entire object $\OO$ is in collision. Therefore, the collision space of $\OO$ is completely contained within the collision space of $\OO'$. This allows us to make the following observation:

\begin{obs}\label{obs::correctness}
    Consider an object $\OO$ and a set of obstacles $\SS$. Let $c_1, c_2 \in \cfree(\OO)$ be two collision-free configurations of the object. If there is no collision-free path between these configurations for its subset $\OO' \subset \OO$, then there is no collision-free path connecting these configurations for $\OO$.
\end{obs}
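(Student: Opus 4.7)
The plan is to reduce the statement to the set inclusion $\cfree(\OO) \subseteq \cfree(\OO')$ whenever $\OO' \subset \OO$, and then argue by contrapositive. Essentially, shrinking the object can only enlarge the free space, so any path that worked for the bigger object still works for the smaller subset.

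First I would establish the key set inclusion. Let $c \in \cc$ be an arbitrary configuration. Because $\OO' \subset \OO$ and rigid motions are homeomorphisms, $c(\OO') \subset c(\OO)$, and since the interior operator is monotone with respect to inclusion, $\operatorname{int} c(\OO') \subset \operatorname{int} c(\OO)$. Hence if $[\operatorname{int} c(\OO')] \cap [\operatorname{int} \SS] \neq \emptyset$, then $[\operatorname{int} c(\OO)] \cap [\operatorname{int} \SS] \neq \emptyset$. Applying Def.~\ref{dfn::collision_space}, this gives $\ccol(\OO') \subseteq \ccol(\OO)$, and taking complements inside $\cc$, $\cfree(\OO) \subseteq \cfree(\OO')$.

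Next, I would translate the inclusion into a statement about paths. Suppose, for contradiction, that the hypothesis holds (no collision-free path between $c_1$ and $c_2$ exists for $\OO'$), but that nevertheless there is a continuous path $\gamma : [0,1] \to \cfree(\OO)$ with $\gamma(0) = c_1$ and $\gamma(1) = c_2$. By the inclusion just established, the same map $\gamma$ lands in $\cfree(\OO')$, and is therefore a collision-free path for $\OO'$ between $c_1$ and $c_2$ (note $c_1, c_2 \in \cfree(\OO) \subseteq \cfree(\OO')$, so they are legitimate endpoints in $\cfree(\OO')$). This contradicts the hypothesis, so no such $\gamma$ can exist.

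There is no real obstacle here; the statement is essentially a monotonicity property of the collision map, and the only point worth being careful about is that the collision definition uses interiors rather than the sets themselves, which is why I explicitly invoke monotonicity of $\operatorname{int}$. Everything else is direct set manipulation and the observation that a path in a subset of a space is automatically a path in the ambient space.
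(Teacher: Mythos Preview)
Your proposal is correct and follows exactly the reasoning the paper uses: the paper's justification (given in the paragraph immediately preceding the observation) is that if $\OO'\subset\OO$ is in collision at $c$ then $\OO$ is in collision at $c$, i.e.\ $\ccol(\OO')\subseteq\ccol(\OO)$ and hence $\cfree(\OO)\subseteq\cfree(\OO')$, from which the contrapositive path argument is immediate. Your version is simply a more careful write-up of this same monotonicity argument, including the explicit handling of interiors.
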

Therefore, if some subset $\OO'$ of $\OO$ in configuration $c$ is caged, then the entire object $\OO$ in the same configuration $c$ is caged.
This means that if we construct $\ascol_{U}$ in such a way that for any configuration $c \in \ascol_{U}$ there exists a subset $\OO'$ of $c(\OO)$ such that $\OO'$ is in collision, then $c(\OO)$ is also in collision. 
In our previous work~(\cite{varava_2}) we defined an $\varepsilon-$core of an object as follows:
\begin{definition}
The $\varepsilon$-core of an object $\OO$ is the set $\OO_{\varepsilon}$ comprising the points of $\OO$ which lie at a distance\footnote{By distance here we mean Euclidean distance in $\R^n$} of at least $\varepsilon$ from the boundary of $\OO$:
$\OO_{\varepsilon} = \{p\in\OO~|~d(p,\partial\OO)\geq\varepsilon\} $.
\end{definition}

\begin{figure}[htb!]
\centering
\includegraphics[width=0.5\linewidth]{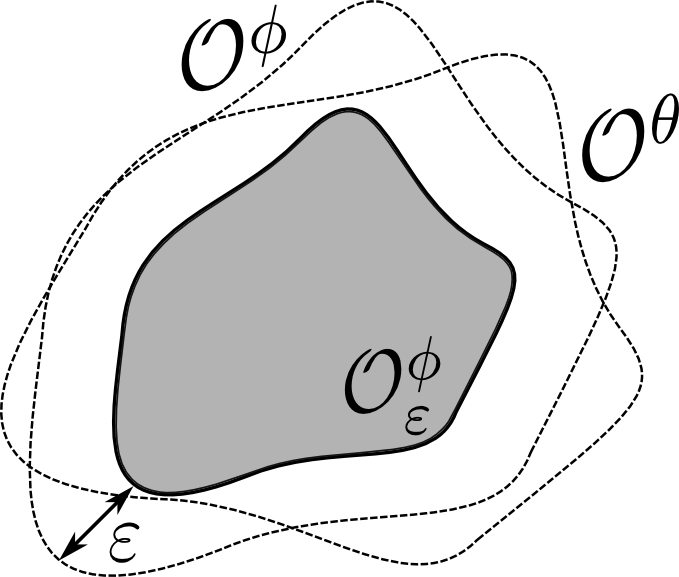}
\caption{\label{rotated}An $\varepsilon-$core remains inside the object when we slightly rotate it}
\end{figure} 

Now, for an object $\OO$ and its $\varepsilon$-core $\OO_{\varepsilon}$, we write $\OO^{\phi}$ and $\OO_{\varepsilon}^{\phi}$ respectively to mean that their orientation is fixed at $\phi \in SO(n)$. So, let $\ccol(\OO_{\varepsilon}^{\phi})$ denote the collision space of $\OO_\varepsilon$ with a fixed orientation $\phi$. Note that since the orientation is fixed, we can identify $\ccol(\OO_{\varepsilon}^{\phi})$ with a subset of $\R^n$.

In~\cite{varava_2}, we showed that for an object $\OO$, $\varepsilon>0$ and a fixed orientation $\phi\in SO(n)$ there exists a non-empty neighbourhood $U(\phi,\varepsilon)$ of $\phi$ such that for any $\theta \in U(\phi,\varepsilon)$, $\OO^{\phi}_{\varepsilon}$ is contained in $\OO^{\theta}$, see Fig.~\ref{rotated}.

In Sec.~\ref{orientation-discretization}, we address the problem of representing and discretizing the space of orientations $SO(n)$, and show how it is related to the notion of $\varepsilon-$core. 



\section{Existence of $\delta$-clearance paths}
For safety reasons, in path planning applications a path is often required to keep some minimum amount of clearance to obstacles. The notion of clearance of an escaping path can also be applied to caging: one can say that an object is partially caged if there exist escaping paths, but their clearance is small and therefore the object is unlikely to escape.

\begin{definition}
\label{def:delta-connect}
We say that two configurations are $\delta$-connected if and only if there exists a collision-free path of clearance at least $\varepsilon$ connecting these configurations.
\end{definition}

Consider a superset of our object $\OO$, defined as a set of points lying at most at distance $\delta$ from $\OO$, and let us call it a \emph{$\delta$-offset} of the object: $\OO_{+\delta} = \{p \in \R^n~|~d(p, \OO) \leq \delta \}$.
Equivalently to Def.~\ref{def:delta-connect}, we can say that two configurations $c_1$ and $c_2$ are $\delta$-connected in $\cfree(\OO)$ if and only if they are path-connected in $\cfree(\OO_{+\delta})$.

Consider now the following modification of our algorithm. If we enlarge the $\varepsilon-$core by a $\delta-$offset, then our rotated object will not be guaranteed to contain it anymore, but the distance between any point of the enlarged core that is located outside of the rotated object and the object itself will not exceed $\delta$. This means that if for this enlarged core our algorithm reports that two configurations are disconnected, then they either are disconnected in reality, or can be connected by a path of clearance at most $\delta$. Let us denote the resulting space approximation by $\afree_{\varepsilon, \delta}(\OO)$.

One application of this observation is narrow passage detection. One can use our free space approximation to identify narrow passages as follows.
If two configurations are disconnected in $\afree_{\varepsilon, \delta}(\OO)$, but connected in $\afree_{\varepsilon}(\OO)$, then they are connected by a narrow passage with clearance at most $\delta$. Our approximation then can be used to localize this passage, so that probabilistic path planning algorithm can sample in this location.

Furthermore, we can view $\delta$ as the level of accuracy of the approximation: assume we want to use a coarse discretization of the orientation space, and therefore the distance between adjacent orientations is large. This will require a larger $\varepsilon$, in which case some important information about the object's shape might not be captured by the $\varepsilon-$core. This might lead to a very conservative approximation of the free space. If now we consider  $\afree_{\varepsilon, \delta}(\OO)$, we might get more caging configurations. These results will be considered with respect to the used parameter $\delta$: if two configurations are disconnected in $\afree_{\varepsilon, \delta}(\OO)$, then the maximum possible clearance of a path connecting them in reality is at most $\delta$.

This leads us to a tradeoff between the desired accuracy of our approximation expressed in terms of clearance $\delta$, and the number of slices that we are willing to compute. The fewer slices we use, the larger $\delta$ we will need to consider.


\section{Discretization of $SO(n)$}
\label{orientation-discretization}
Elements of $SO(n)$ can be seen as parametrizing rotations in $\R^n$, and for any $q \in SO(n)$ we define $\rot_q$ as the associated rotation. The notion of \emph{displacement} of a point after applying a rotation helps us to understand how the size of $\varepsilon-$core is related to the discretization of $SO(n)$:

\begin{definition}
Let $\D(\rot_q)$ denote the maximal \emph{displacement} of any point $p \in \OO$ after applying $\rot_q$, i.e.\ $\D (\rot_q) = \max_{p \in \OO}(d(p, \rot_q (p)))$, then $\OO_{\varepsilon} \subset \rot_q(\OO)$ if $\D (\rot_q) < \varepsilon$.
\end{definition}

Our goal now is to derive upper bounds for maximum displacement of any point in the object to make sure that the $\varepsilon-$core always remains inside the object when it is being rotated between different orientations belonging to the same slice, and how to efficiently discretize the orientation space.

\subsection{Displacement in 2D}

In our previous work (\cite{varava_2}), we derived the following upper bound for the displacement of a two-dimensional object:
\[
\D (\rot_q) \leq 2|\sin(q/2)|\cdot \rad(\OO),
\]
assuming that we rotate the object around its geometric center, $\rad(\OO)$ denotes the maximum distance from it to any point of the object, and $q$ is the rotation angle.

In the two-dimensional case, discretization of the rotation space is simple: given a desired number of slices, we obtain the displacement $\D(R_{q})$ induced by rotation between two neighboring orientations, and compute a set of orientation samples $\{\phi_1 = 0, \phi_2 = 2q,\ldots, \phi_{s} = 2(s-1)q\}$, where $s = \lceil \pi / q \rceil$. Then, we choose the $\varepsilon > \D (\rot_q)$. This gives us a covering $\{U(\phi_1,\varepsilon),\ldots, U(\phi_s,\varepsilon)\}$ of $SO(2)$, where for each $i \in \{1,\ldots, s\}$ we define $U(\phi_i,\varepsilon) = [\phi_i - q, \phi_i + q]$.\footnote{This is a cover by closed sets, but given $q' > q$ satisfying $\D(R_{q}) < \varepsilon$ we can use instead $U(\phi_i,\varepsilon) = (\phi_i - q', \phi_i + q')$ which results in the same graph.}.

\subsection{Displacement in 3D }

We now discuss the three-dimensional case. Similarly to the previous case, our goal is to cover $SO(3)$ with balls of fixed radius. To parametrize $SO(3)$ we use unit quaternions. For simplicity, we extend the real and imaginary part notation from complex numbers to quaternions, where $\Re q$ and $\Im q$ denote the real and ``imaginary'' parts of the quaternion $q$. Further, we identify $\Im q = q_i i + q_j j + q_k k$ with the vector $(q_i,q_j,q_k)\in\R^3$; and we write $\bar q$, and  $|q|$ to mean the conjugate $\Re q - \Im q$ and the norm $\sqrt{q \bar q}$, respectively.

A unit quaternion $q$ defines a rotation $R_q$ as the rotation of angle $\theta_q = 2\cos^{-1}(\Re q)$ around axis $w_q = \frac{\Im q}{|\Im q|}$. This allows one to calculate the displacement of the rotation $\D(q) = \D(R_q)$ as:
\[
\frac{\D(q)}{\rad(\OO)} = 2\sin(\frac{\theta_q}{2}) = 2\sin(\cos^{-1}(\Re q)) = 2 |\Im q|
\]

We use the \emph{angular distance}~(\cite{yershova}) to define the distance between two orientations:
\[\rho(x,y) = \arccos(|\langle x,y \rangle|),\]
where $x$ and $y$ are two elements of $SO(3)$ represented as unit quaternions which are regarded as vectors in $\R^4$, and $\langle x,y\rangle$ is their inner product.
We define the angle distance from a point to a set $S\subseteq SO(3)$ in the usual way as 
$$\rho(S,x) = \min_{y\in S} \rho(y,x)$$
 
 \cite{yershova} provide a deterministic sampling scheme to minimize the \emph{dispersion} 
 $$\Delta(S) = \max_{x\in SO(3)} \rho(S,x).$$
Intuitively, the dispersion of a set $\Delta(S)$ determines how far a point can be from $S$, and in this way it determines how well the set $S$ covers $SO(3)$.
Now, assume we are given a set of samples $S\subseteq SO(3)$ such that $ \Delta(S) < \Delta$.
Then for any point $p\in SO(3)$ denote $\D(p\bar S) = \max_{q\in S} \D(p\bar q)$, we want to show that in these conditions, there exists some small $\varepsilon$ such that $\max_{p \in SO(3)}\D(p\bar S) < \varepsilon$.
This would imply that there exists some $\Delta' > \Delta$ such that if we take $U = \{ q\in SO(3) ~|~ \rho(1,q) < \Delta' \}$, then denoting $U_p = \{qp~|~q \in U\}$, the family ${\{U_p\}}_{p\in S}$ fully covers $SO(3)$ and for any $q\in U_p$ satisfies $\D(q\bar p) < \varepsilon$.

Now, Proposition~\ref{prop:displacement-norm} allows us to establish the relation between the distance between two quaternions and displacement associated to the rotation between them.
\begin{proposition}\label{prop:displacement-norm}
	Given two unit quaternions $p,q$, the following equation holds:
    \begin{equation}
    \D(p\bar q) = 2\sin(\rho(p,q))\rad(\OO).
    \end{equation}
\end{proposition}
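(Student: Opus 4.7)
The plan is to reduce the claim to the single-rotation displacement formula derived just above the proposition, namely
\[
\D(r) = 2|\Im r| \cdot \rad(\OO)
\]
applied to the unit quaternion $r = p\bar q$. So the whole proof amounts to showing that $|\Im(p\bar q)| = \sin(\rho(p,q))$.

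First I would establish the identity $\Re(p\bar q) = \langle p, q\rangle$, where the inner product on the right treats $p,q$ as vectors in $\R^4$. This is a direct computation from the definition of quaternion multiplication together with $\bar q = \Re q - \Im q$: expanding $p\bar q$ and collecting the scalar part produces exactly $p_1 q_1 + p_i q_i + p_j q_j + p_k q_k$. Next, since $p$ and $q$ are unit quaternions, so is $p\bar q$, giving $(\Re(p\bar q))^2 + |\Im(p\bar q)|^2 = 1$, hence
\[
|\Im(p\bar q)|^2 = 1 - \langle p, q\rangle^2.
\]

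The final step is trigonometric bookkeeping. By definition $\rho(p,q) = \arccos(|\langle p,q\rangle|)$, so $\cos^2(\rho(p,q)) = \langle p,q\rangle^2$ and therefore
\[
\sin^2(\rho(p,q)) = 1 - \langle p,q\rangle^2 = |\Im(p\bar q)|^2.
\]
Because $\rho(p,q) \in [0,\pi/2]$, $\sin(\rho(p,q)) \geq 0$, and taking square roots yields $|\Im(p\bar q)| = \sin(\rho(p,q))$. Substituting into the displacement formula gives the desired equality.

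I do not anticipate a real obstacle here; the only point requiring care is the absolute value in the definition of $\rho$, which corresponds to identifying the antipodal unit quaternions $q$ and $-q$ as the same rotation. This identification is consistent with the displacement formula, since $|\Im(-r)| = |\Im r|$ and hence $\D$ is invariant under $r \mapsto -r$, so the use of $|\langle p,q\rangle|$ rather than $\langle p,q\rangle$ in $\rho$ does not introduce any inconsistency.
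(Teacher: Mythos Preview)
Your proof is correct and considerably more direct than the paper's. The paper works in axis--angle coordinates, writing $p = \cos(\theta_p/2) + \sin(\theta_p/2)\,w_p$ and similarly for $q$, expands $|\Im(p\bar q)|$ via the quaternion product formula into cross-product and dot-product terms, and then spends roughly a page of trigonometric manipulation (Pythagorean decompositions, tangent substitutions, completing products) to reduce both sides to the common expression $2\sqrt{1 - \langle p,q\rangle^2}$. You instead invoke the coordinate-free identity $\Re(p\bar q) = \langle p,q\rangle$ together with $|p\bar q| = 1$, which immediately yields $|\Im(p\bar q)|^2 = 1 - \langle p,q\rangle^2$ and finishes the argument in a few lines. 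Your route is cleaner and makes transparent that the proposition is really just the standard quaternion identity $\Re(p\bar q) = \langle p,q\rangle$ combined with the already-established single-rotation displacement formula $\D(r) = 2|\Im r|\,\rad(\OO)$; the paper's computation, while correct, obscures this simple structure by unpacking everything into axis--angle form from the outset.
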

The proof of Proposition~\ref{prop:displacement-norm} can be found in appendix.

This means that if we want to cover $SO(3)$ with patches $U_i$ centered at a point $p_i$ such that $\D( p_i\bar q)$ for any $q\in U_i$ is smaller than some $\varepsilon$, we can use any deterministic sampling scheme on $SO(3)$ (e.g.~\citep{yershova}) to obtain a set $S$ with dispersion $\Delta(S) < \arcsin(\frac{\varepsilon}{2\rad(\OO)})$. Finally, by considering patches of the form $U(s,\varepsilon) = \{p \in SO(3) | \rho(s,p) < \Delta \}$, we obtain the corollary:

\begin{corollary}
If $S \subseteq SO(3)$ has a dispersion $\Delta(S)<\arcsin(\frac{\varepsilon}{2\rad(\OO)})$ then the family of patches $\{ U(s,\varepsilon) ~|~ s \in S \}$ forms a cover of $SO(3)$.
\end{corollary}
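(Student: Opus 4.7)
The plan is to observe that this corollary is essentially an unpacking of the definition of dispersion, combined with the displacement bound from Proposition~\ref{prop:displacement-norm}. Let $\Delta$ denote $\arcsin\!\bigl(\tfrac{\varepsilon}{2\rad(\OO)}\bigr)$, so that by hypothesis $\Delta(S) < \Delta$. First, I would fix an arbitrary orientation $x \in SO(3)$ and use the definition
\[
\Delta(S) = \max_{y \in SO(3)} \rho(S,y)
\]
to conclude that $\rho(S,x) \leq \Delta(S) < \Delta$. Since $\rho(S,x) = \min_{s \in S}\rho(s,x)$ and $S$ is finite (or at worst closed), this minimum is attained at some $s^\star \in S$, giving $\rho(s^\star,x) < \Delta$, which by the definition $U(s,\varepsilon) = \{p \in SO(3) \mid \rho(s,p) < \Delta\}$ means exactly $x \in U(s^\star,\varepsilon)$.

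Second, I would tie this back to the displacement interpretation to explain why the patches deserve the name $U(s,\varepsilon)$. By Proposition~\ref{prop:displacement-norm}, for any $p \in U(s,\varepsilon)$ we have
\[
\D(p\bar s) = 2\sin(\rho(p,s))\,\rad(\OO) < 2\sin(\Delta)\,\rad(\OO) = \varepsilon,
\]
using monotonicity of $\sin$ on $[0,\pi/2]$ and the fact that $\Delta = \arcsin(\varepsilon/(2\rad(\OO))) \in [0,\pi/2]$. So every point of $U(s,\varepsilon)$ indeed induces displacement less than $\varepsilon$ relative to $s$, which is the intended geometric meaning and confirms that the patch size is calibrated correctly to the $\varepsilon$-core argument.

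Since $x$ was arbitrary, the first step gives $SO(3) = \bigcup_{s \in S} U(s,\varepsilon)$, proving the corollary. There is no real obstacle here: the work has already been done in bounding displacement in terms of angular distance (Proposition~\ref{prop:displacement-norm}) and in choosing the sampling scheme so that $\Delta(S)$ is small enough. The only subtlety worth flagging in the write-up is the direction of the inequality, namely that the strict bound $\Delta(S) < \Delta$ is needed so that the open patches (defined by $\rho(s,p) < \Delta$) actually cover $SO(3)$; if one used a non-strict bound with open patches, a boundary configuration at exactly distance $\Delta(S)$ could fail to be covered.
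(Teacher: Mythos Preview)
Your proposal is correct and matches the paper's approach: the paper does not give a separate proof of the corollary but simply presents it as an immediate consequence of the definition of dispersion together with Proposition~\ref{prop:displacement-norm} and the patch definition $U(s,\varepsilon)=\{p\in SO(3)\mid \rho(s,p)<\Delta\}$. Your write-up faithfully unpacks exactly this, including the observation that Proposition~\ref{prop:displacement-norm} is what justifies the ``$\varepsilon$'' label on the patches.
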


Given such a cover of $SO(3)$, recall that we want to approximately reconstruct the full free space of the object $\cfree$ as the union of slices $\asfree_{U(s,\varepsilon)}$. This requires us to test whether the orientation components of two slices overlap. To make this efficient, we create a \emph{slice adjacency graph}, which is simply a graph with vertices $S$ and edges $(s,s')$ if $U(s,\varepsilon)\cap U(s',\varepsilon) \neq \emptyset$.

To compute the graph adjacency, we note that if two slices $U(s,\varepsilon), U(s',\varepsilon)$ overlap, there must exist some $p\in SO(3)$ such that $\rho(p,s),\rho(p,s')<\Delta$, which implies $\rho(s,s') < \rho(s,p) + \rho(s',p) < 2\Delta$. This is leveraged in Alg.~\ref{alg:adjacency-graph}.

\begin{algorithm}[hbt!]
    \caption{\label{alg:adjacency-graph}%
        ComputeAdjacentOrientations%
    }
    \SetKwInOut{Input}{input}
    \SetKwInOut{Output}{output}
    \Input{$S$ --- a set of points in $SO(3)$. \\
	    $\Delta$ --- dispersion of $S$.}
    \Output{$G$  --- a patch adjacency graph.}
    $V \gets S$\\
    $T \gets KDTree(V)$\\
    $E \gets \{\}$\\
    \For{$p \in V$}{%
        $P\gets T.query(p,dist\leq 2\sin(\Delta))\setminus\{p\}$\label{l:query} \\
        $P\gets P \cup T.query(-p,dist\leq 2\sin(\Delta))$\label{l:query2}\\
        \For{$q \in L$}{%
            add $(p,q)$ to $E$\\
        }
    }
    \Return{$G=(V,E)$}\\
\end{algorithm}

The algorithm starts by setting $V = S$, as this is the set of vertices in the graph, and we put these vertices in a $KDTree$ in order to quickly perform nearest neighbor queries.
Now, to compute the set of edges $E$, we locate for each $p\in S$ the points at an angle distance smaller than $2\Delta${ }\footnote{Since the $KDTree$ $T$ uses the Euclidean distance in $\R^4$ we employ the formula $\| p - q \| = 2 \sin(\frac{\rho(p,q)}{2}) $.} in line~\ref{l:query}.
Finally, in line~\ref{l:query2} we also add edges to the points at an angle distance smaller than $2\Delta$ of $-p$, as both $p$ and $-p$ represent the same orientation.

\subsection{Different resolutions and dispersion estimation}

\begin{figure*}[htb!]
\centering
\includegraphics[height=4cm]{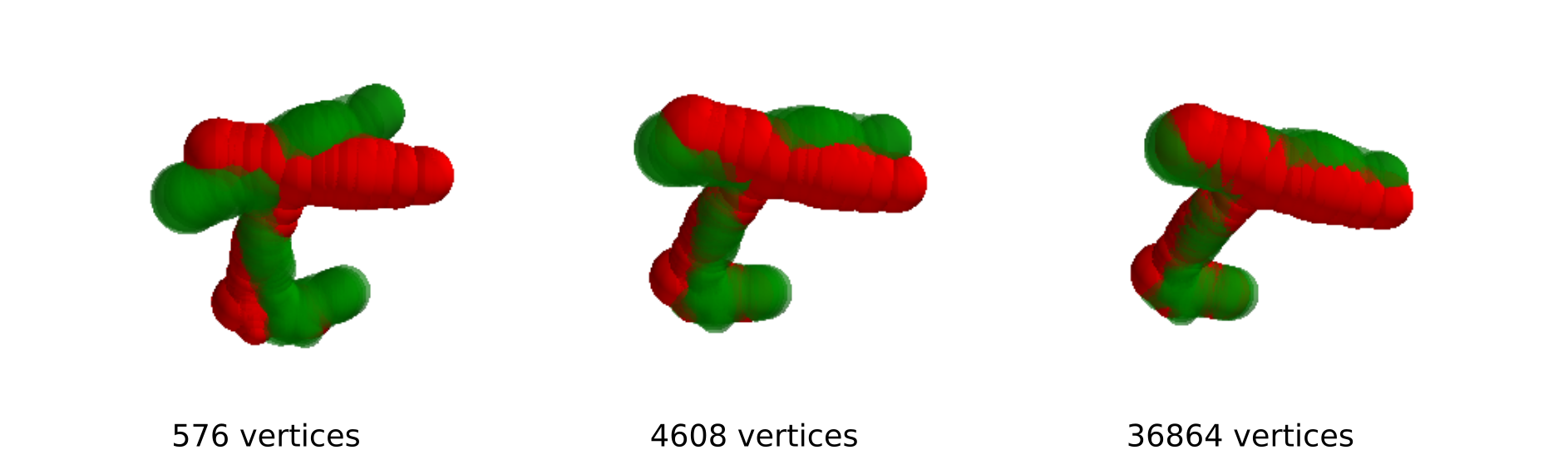}
\caption{ In this figure we show an approximation of a drill as a collection of balls around its medial axis. This representation is visualized at two distinct orientations which are adjacent in the graph over $SO(3)$. The labels underneath each subfigure show how many vertices the corresponding graph has. As expected the larger the grid, the denser the sampling of $SO(3)$ and the closer the two objects are to each other. On one extreme we observe that the overlap between the object in the first grid is very limited and therefore the obtained connectivity graph should not be taken to be very informative. On the other extreme, in the third grid, we see that the overlap between adjacent orientations is almost complete.}
\label{fig:different-rotations}
\end{figure*}

\begin{algorithm}[htb!]

    \caption{\label{alg:dispersion}%
        Approximating the dispersion of a set of points $S\subset SO(3)$%
    }
    \SetKwInOut{Input}{input}
    \SetKwInOut{Output}{output}
    \Input{$S$ --- Points in $SO(3)$.}
    \Output{$\Delta$ --- Approximate dispersion.}

    $\Delta \gets 0$\\
    $T \gets KDTree(V)$\\
    \For{$i  \in \{1,\ldots, N\}$ \label{ln:disp:loop-beg}}
    {
	    $v \gets \arg\max_{w\in SO(3)} \rho(S,w)$\label{line:maximize}\\
	    $w \gets T.query(v)$\label{line:query}\\
        $\Delta \gets \max(\Delta,2\sin^{-1}(\frac{\|w-v\|}{2})$\label{line:distance}$)$ \\
    \label{ln:disp:loop-end}
    }
    \Return{$\Delta$}
\end{algorithm}

One of the prerequisites of Alg.~\ref{alg:adjacency-graph} is the availability of an estimate of the dispersion of $\Delta(S)$. In our implementation we used the algorithm designed by~\cite{yershova} to compute $S$ where the authors provide a provable upper bound for dispersion. 

However, since this is an upper bound, and we want a tighter estimate of the dispersion to reduce the number of slices, in our implementation we employed a random sampling method to estimate the dispersion. 

Alg.~\ref{alg:dispersion} summarizes the procedure by which we approximate the dispersion. On each iteration of the main loop (lines~\ref{ln:disp:loop-beg}-\ref{ln:disp:loop-end}) it retrieves an element of $SO(3)$ which is locally the furthest from $S$ this is done by drawing a random sample followed by gradient ascent in line~\ref{line:maximize}. In line~\ref{line:query} the nearest element to $w$ of $S$ is retrieved (since the distance to this element will be the distance from $w$ to $S$), and finally in~\ref{line:distance} we use the fact that $\|w-v\| = 2\sin(\frac{\rho(w,v)}{2})$ to update the current value of the dispersion.

\subsection{The choice of $\varepsilon$ and $SO(3)$ discretization in practice}

Depending on the chosen discretization of the orientation space and its dispersion, the $\varepsilon$ value can be chosen such that it is greater than the estimated displacement:

\[
\varepsilon > 2 \sin\left(\dfrac{\Delta(S)}{2}\right) \rad(\OO)
\] 
 
To discretize $SO(3)$, we compute 3 different grids corresponding to different resolution levels using the software provided by \cite{yershova}. The first grid contains 576 vertices, and the respective dispersion estimate is 0.23. The second grid consists of 4608 vertices and its estimated dispersion is 0.10, and the third grid has 36864 vertices and a disperion estimate of 0.05. Fig.~\ref{fig:different-rotations} illustrates how much an object rotates between two neighboring orientations in each of these grids. In our experiments, we analyze how the performance of the algorithm is affected by the choice of a grid.


\section{Free Space Approximation}
Let us now discuss how we connect the slices to construct an approximation of the entire free space, see Alg.~\ref{alg:connectivity-graph}.

Let $\mathcal{G}(\acol_\varepsilon(\OO)) = (V, E)$ be a graph approximating the free space.  The vertices of $\mathcal{G}$ correspond to the connected components $\{aC^i_1,\ldots, aC^i_{n_i}\}$ of each slice, $i \in \{1,\ldots,s\}$, and are denoted by $v = (aC, U)$, where $aC$ and $U$ are the corresponding component and orientation interval. Two vertices representing components $C_p \subset aSl_{U_i}^{free}$ and $C_q \subset aSl_{U_j}^{free}$, $i \neq j$, are connected by an edge if the object can directly move between them. For that, both the sets $U_i, U_j$, and $aC_q, aC_p$ must overlap: $U_i \cap U_j, aC_q \cap aC_p \neq \emptyset$.
 $\mathcal{G}(\acol_\varepsilon(\OO))$ approximates the free space of the object: if there is no path in $\mathcal{G}(\acol_\varepsilon(\OO))$ between the vertices associated to the path components of two configurations $c_1, c_2$, then they are disconnected in $\cfree(\OO)$.

We start by choosing one orientation and run a breadth-first search over the orientation grid $\mathcal{Q}$. When reaching a particular orientation, we construct the slice corresponding to it. 
In line~\ref{line:slice}, we compute the free space of a slice as explained in Alg.~\ref{compute-slice-connectivity}. 
In line~\ref{line:edges}, we check which connected components of adjacent slices overlap, and add edges between them, see Alg.~\ref{alg::compute-edges}.

\begin{algorithm}[htb!]
    \caption{\label{alg:connectivity-graph}ComputeConnectivityGraph}
    \SetKwInOut{Input}{input}
    \SetKwInOut{Output}{output}
    \Input{$O_{obst}$, $O_{obj}$ --- spherical representations of the obstacles and the object.\\
    $\delta$ --- clearance parameter \\
    $\mathcal{Q}$ --- a grid over $SO(3).$}
    \Output{$\mathcal{G}$ --- a connectivity graph of the free space.}
    $O_{\varepsilon} \gets ComputeEpsilonCore(O_{obj}, \delta, \mathcal{Q})$ \\
    $MarkedOrientations \gets \emptyset$\\
    $Queue \gets \emptyset$\\
    $q \gets \mathcal{Q}[0]$\\
    $MarkedOrientations.add(q)$\\ 
    $Slices[q] \gets ComputeSlice(O_{obst}, O_{\varepsilon}, q)$\\
    $Queue.add(q)$\\
    \label{line:while} \While{$Queue \neq \emptyset$} 
    {
    		$q_{cur} \gets Queue.dequeue()$\\
    		$CurrentSlice \gets Slices[q_{cur}]$\\
    		\For{$q_{adj} \in \mathcal{Q}.adjacentOrientations(q_{cur})$}
    		{
    		
    			\If{$q_{cur} \notin MarkedOrientations$}
    			{
    				\If{$Slices[q_{adj}] = \emptyset$}
    				{
    					$Slices[q_{adj}] \gets ComputeSlice(O_{obst}, O_{obj}, q)$ \label{line:slice}\\
    					$Queue.enqueue(q_{cur})$\\
    					$MarkedOrientations.add(q_{cur})$\\
    				}
    				$AddEdges(Slices[q_{cur}], Slices[q_{adj}])$ \label{line:edges}\\
				
    			}
    			
    		}
    		$Slices[q_{cur}].reset()$\\
    }
   
    \Return{$\mathcal{G}$}\\
\end{algorithm}

In order to query connectivity, the slices approximations should be preserved. In our current implementation, each slice is deleted as soon as the connectivity check between it and the slices adjacent to it is performed, in order to optimize memory usage. In this case, one can save the slice approximation together with the resulting graph to disk, for later use by a querying algorithm.

\subsection{Construction of Slices}
Now, let us discuss how we approximate path-connected components of the free space of each slice, see Alg.~\ref{compute-slice-connectivity}. 
Given a set of obstacles, an object, and a particular orientation of its $\varepsilon-$core, we start by computing the collision space of the slice.

In~\cite{varava_2}, we derive the following representation for the collision space of  $\OO^{\phi}_{\varepsilon}$:
\[
	\ccol(\OO_{\varepsilon}^{\phi}) = \bigcup_{i,j} (B_{R_j + r_i - \varepsilon}(X_j - \overline{GY_i})),
\]
where $G$ is the origin chosen as the geometric center of the object, and $\overline{GY_i}$ denotes the vector from $G$ to $Y_i$.

Indeed, the object represented as a union of balls collides with the obstacles if at least one of the balls is in collision, so the collision space of the object is a union of the collision spaces of the balls shifted with respect to the position of the balls centers:
\[
\ccol(\OO_{\varepsilon}^{\phi}) = \bigcup_{i \in \{1 ... m\}} \mathcal{C}^{col}(B_{r_i - \varepsilon}(Y_i)) - \overline{GY_i}
\]

Now, each ball collides with the obstacles when the distance from the obstacles to its center is not greater than the radius of the ball, so the collision space of a single ball of radius $r_i - \varepsilon$ can be written as:

$$
\{x \in \mathbb{R}^d | \operatorname{d}(x, S) \leq r_i - \varepsilon\}  = \bigcup_{j \in \{1 .. n\}} B_{R_j + r_i - \varepsilon}(X_j)
$$

\begin{algorithm}[htb!]
    \SetKwInOut{Input}{input}
    \SetKwInOut{Output}{output}
  \Input{
	  $O_{obst}, O_{\varepsilon}$ --- spherical representations of the obstacles and the $\varepsilon-$core.\\
	  $q$ --- orientation.
  }
  \Output{
	  $aSl^{free}$ --- A set of connected components of the slice corresponding to $q$
	  }
  $\ccol(O_{\varepsilon}) \gets$ Collision-Space($O_{obst}$, $O_{obj}$, $q$) \\
  $V(\ccol(O_{\varepsilon})) \leftarrow $ Weighted-Voronoi-Diagram$(\ccol(O_{\varepsilon}))$\\
  $\afree \leftarrow$ Dual-Diagram($V(\ccol(O_{\varepsilon}))$)  \\    
  $aC_{0} \leftarrow$ Compute-Infinite-Component()\\
  $i \leftarrow 0$ \\
  \ForEach{$ball \in \afree$ }{
                \If{Connected-Component($ball$) $= \emptyset$}
                {$i \leftarrow i + 1$\\
                $aC_{i} \leftarrow$ Compute-Component($ball$)
                }
    }
    \Return{$aSl^{free} = \{aC_0,\ldots, aC_n\}$}
\caption{\label{compute-slice-connectivity} 
 ComputeSlice}
\end{algorithm} 

\begin{figure}[htb!]
\centering
\includegraphics[width=4cm]{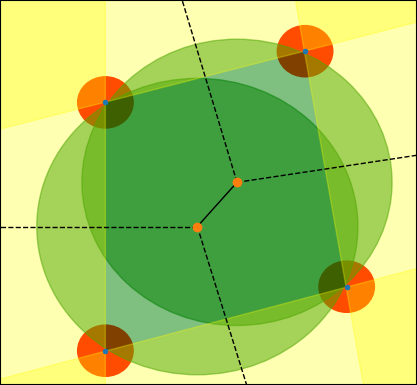}
    \caption{\label{toy-dual-diagram}  Green circles and yellow halfspaces (``infinite circles'') together represent a dual diagram of the red circles. Green circles are centered at the vertices of the Voronoi diagram; yellow halfspaces correspond to infinite edges of the diagram (dashed lines).}
\end{figure}

At the next step, we approximate the free space of the slice. For this, we  approximate the complement of the collision space by constructing the \emph{dual diagram}~(\cite{edelsbrunner-skin}) to the set  of balls representing the collision space.

\begin{figure*}[htb!]
\centering
\begin{tabular}{c@{\hspace{1em}}c@{\hspace{1em}}c@{\hspace{1em}}c}
{\includegraphics[height=3cm]{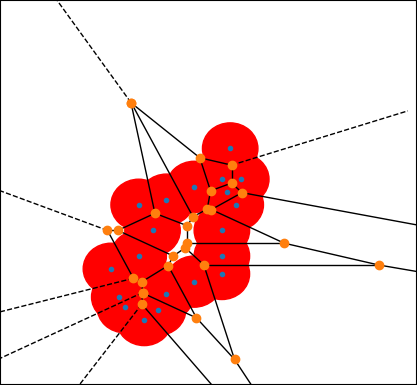}} &
{\includegraphics[height=3cm]{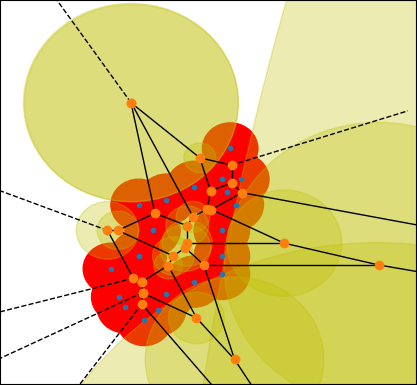}} &
{\includegraphics[height=3cm]{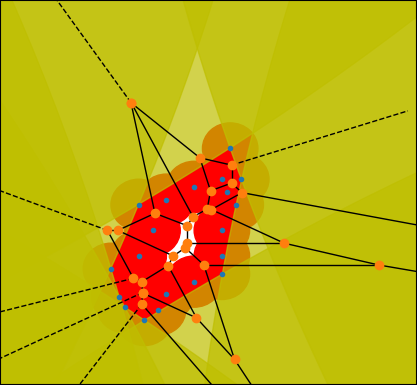}} &
{\includegraphics[height=3cm]{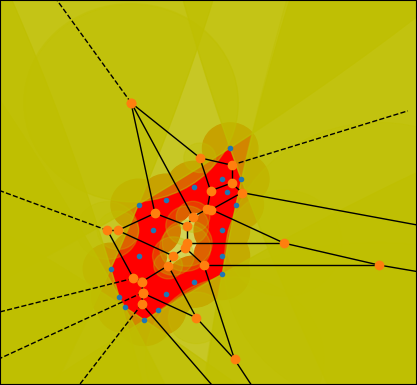}}
\end{tabular}
    \caption{\label{dual-diagram}  From left to right, the elements of the dual diagram approximating the free space. We start with building a Voronoi diagram (visualized in black); the second figure shows a set of finite orthogonal balls, corresponding to the regular edges of the diagram;  the third figure visualizes the ``infinite'' orthogonal balls corresponding to the infinite edges of the diagram; finally, the last figure depicts full diagram.  Red circles with blue centers represent the collision space, the yellow circles approximate the dual diagram. }
\end{figure*}

A dual diagram $Dual(\bigcup B_i)$ of a union of balls $\bigcup B_i$ is a finite collection of balls such that the complement of its interior is contained in and is homotopy equivalent to $\bigcup B_i$. It is convenient to approximate $\cfree(\OO_{\varepsilon}^{\phi})$ as a dual diagram of collision space balls for several reasons. First, balls are simple geometric primitives which make intersection checks trivial. Second, the complement of the dual diagram is guaranteed to lie strictly inside the collision space, which provides us with a conservative approximation of the free space. Finally, homotopy equivalence between its complement and the approximate collision space implies that our approximation preserves the connectivity of the free space that we want to approximate. Another advantage of a dual diagram is that it is easy to construct.

A weighted Voronoi diagram is a special case of a Voronoi diagram, where instead of Euclidean distance between points, a special distance function is used.  In our case, the weighted distance of a point $x \in \R^n$ from $B_{r_i}(z_i)$ is equal to $d_w(x,B_{r_i}(z_i))= ||x - z_i|| - r^2_i$. 

To construct a dual diagram of a union of balls $\bigcup_i B_{r_i}(z_i)$, we first construct their weighted Voronoi diagram, see Fig.~\ref{toy-dual-diagram}. For each vertex $y_j$ of the weighted Voronoi diagram, let $B_{q_j}(y_j)$ whose radius $q_j$ is equal to the square root of the weighted distance of $y_j$ to any of the four (three in the two-dimensional case) balls from $\bigcup_i B_{r_i}(z_i)$ generating $y_j$. Then, take each infinite edge of the Voronoi diagram, and add an degenerate ``infinitely large'' ball (a half-space) with center at the infinite end of this edge. The entire process of dual diagram construction can be seen on Fig.~\ref{dual-diagram}.

After constructing the dual diagram, we find pairs of overlapping balls in it and run depth-first search to identify connected components. It is important to note that each dual diagram always has exactly one unbounded connected component, representing the outside world.

 
Finally, let us discuss how to understand whether free space approximations of neighboring slices overlap. In Alg.~\ref{alg::compute-edges}, to check whether two connected components in adjacent slices intersect (line~\ref{line:dointersect}), we recall that they are just finite unions of balls. Instead of computing all pairwise intersections of balls, we approximate each ball by its bounding box and then use the CGAL implementation of Zomorodian's algorithm (\cite{zomorodian}), which efficiently computes the intersection of two sequences of three-dimensional boxes. Every time it finds an overlapping pair of boxes, we check whether the respective balls also intersect.

\begin{algorithm}[htb!]

    \caption{\label{alg::compute-edges} AddEdges}
    \SetKwInOut{Input}{input}
    \SetKwInOut{Output}{output}
    \Input{$aSl_{U_i}^{free}, aSl_{U_j}^{free}$ --- free space approximations of two adjacent slices.}
    \Output{$E_{ij}$ --- a set of edges between the connected components of $aSl_{U_i}^{free}$ and $aSl_{U_j}^{free}$.}
    $E_{i,j} \gets \emptyset$\\
    
    \For{$aC_i \in ConnectedComponents(aSl_{U_i}^{free})$}
    {
    		\For{$aC_j \in ConnectedComponents(aSl_{U_j}^{free})$}
    		{    			
    			\If{$aC_i.isInfinite()$ AND $aC_j.isInfinite()$}
    			{
    				$E_{i,j}.addEdge(v(aC_i, U_i), v(aC_j, U_i))$\\
    			}
    			\Else
    			{
    				\If{$DoIntersect(aC_i, aC_j)$ \label{line:dointersect}} 
    				{
    					$E_{i,j}.addEdge(v(aC_i, U_i), v(aC_j, U_i))$\\
    				}
    			}
    		}
    }
    
    \Return{$E_{i,j}$}
\end{algorithm}

 \begin{rmk}\label{exact-graph}
     Recall that in each slice the $a\cfree(\OO^{\phi_i}_\varepsilon)$ are constructed as the dual of the collision space $\ccol(\OO^{\phi_i}_\varepsilon)$, which entails that $a\cfree(\OO^{\phi_i}_\varepsilon)$ has the same connectivity as $\cfree(\OO^{\phi_i}_\varepsilon)$.
     However, it also entails that any connected component of $a\cfree(\OO^{\phi_i}_\varepsilon)$ partially overlaps with the collision space $\ccol(\OO^{\phi_i}_\varepsilon)$.
     This means that for two connected components $C^i_j, C^{i'}_{j'}$ of adjacent slices which do not overlap, it may occur that the corresponding approximations $aC^i_j, aC^{i'}_{j'}$ do overlap. In this case the resulting graph $\mathcal{G}(\acol_\varepsilon(\OO))$ would contain an edge between the corresponding vetices. This effect can be mitigated by verifying whether the overlap between the approximations occurs within the collision space of both slices.  This can be done for example by covering the intersection $aC^i_j \cap aC^{i'}_{j'}$ with a union of balls and checking if it is contained inside the collision space $\ccol(\OO^{\phi_i}_\varepsilon)\cup \ccol(\OO^{\phi_{i'}}_\varepsilon)$.
 \end{rmk}

\section{Parallel implementation}
\label{sec:parallel}

To increase the performance of our algorithm, we designed and implemented a parallelized version that uses multiple threads to compute different slices simultaniously. Alg.~\ref{alg:parallel} describes the process.

Recall that in the single-thread version of the algorithm (see Alg.~\ref{alg:connectivity-graph}), we performed breadth-first search (BFS) over the orientation grid $\mathcal{Q}$. For each orientation, we computed the corresponding slice of the free space and its overlap with neighboring slices. The slice was deleted when all its neighbours were constructed and the connectivity check between them was performed.

A naive approach to parallelization would have a main thread call child threads on to process each new orientation in the queue, however since the orientations are enqueued in sequence it would increase the chances that different threads try to compete for computing the same slice data, necessitating threads to wait for each other to finish. Instead we opted for a simple scheme where each thread performs its own BFS from a different point in the tree taking care to lock access to resources every time shared data is accessed.


\begin{algorithm}[htb!]
	\caption{\label{alg:parallel}ConnectivityGraphThread}
	\SetKwInOut{Input}{input}
	\SetKwInOut{Shared}{shared}
	\Input{
		$O_{obst}$, $O_{obj}$ --- spherical representations of the obstacles and the object.\\
		$\delta$ --- clearance parameter. \\
		$\mathcal{Q}$ --- a graph over $SO(3)$. \\
		$i$ --- an initial orientation.
	}
	\Shared{
		$\mathcal{M}$ --- an array containing a mutex per slice. \\
		$Slices$ --- an array of slices (one per orientation). \\
		$\mathcal{S}$ --- an integer array indicating slice status (0:unseen, 1:seen, 2:fully processed). \\
		$\mathcal{G}$ --- a connectivity graph of the free space \\
	}
	$O_{\varepsilon} \gets ComputeEpsilonCore(O_{obj}, \delta, \mathcal{Q})$ \\
	$Queue \gets \emptyset$\\
	$Queue.enqueue(i)$\\
	\While{$Queue \neq \emptyset$}{ \label{ln:thread:mainloop-begin}
		$q_{cur} \gets Queue.dequeue()$ \\
		$Lock(\mathcal{M}[q_{cur}])$ \\
		\If{ $\mathcal{S}[q_{cur}] = 0$ \label{ln:thread:Sqcur-0} }{
			$Slices[q_{cur}] \gets ComputeSlice(\mathcal{O}_{obst},\mathcal{O}_{\varepsilon},q_{cur}) $\label{ln:thread:proc-cur}\\
			$\mathcal{S}[q_{cur}] \gets  1$ \\
		}
		\If { $\mathcal{S} [q_{cur}] = 1$\label{ln:thread:Sqcur-1} } {
		$n \gets Size(\mathcal{Q}.adjacentOrientations(q_{cur}))$\\
		\For{$q_{adj} \in \mathcal{Q}.adjacentOrientations(q_{cur})$}{
			\label{ln:thread:cur-intersect-begin}
			$lock \gets TryLock(\mathcal{M}[q_{adj}])$ \label{ln:thread:lock-adj} \\ 
			\If{$lock.successful()$}{
				\If{$\mathcal{S}[q_{adj}] = 0$\label{ln:thread:ifunseen}}{
					$Slices[q_{adj}] \gets ComputeSlice(\mathcal{O}_{obst},\mathcal{O}_{\varepsilon},q_{adj})$ \\
					$\mathcal{S}[q_{adj}] \gets 1$\label{ln:thread:markseen} \\
					$Queue.enqueue(q_{adj})$
				}
				\If{$\mathcal{S}[q_{adj}] = 1$}{
				$AddEdges(Slices[q_{cur}],Slices[q_{adj}])$\label{ln:thread:add-edges}\\
				}
				$n\gets n-1$\\
				$Unlock(\mathcal{M}[q_{adj}])$
			}
		}
		\If {$ n = 0$ \label{ln:thread:qcur-all-done} } {
			$\mathcal{S}[q_{cur}] \gets  2$ \\
			$Slices[q_{cur}].reset()\label{ln:thread:freeslice}$ \\
		}
			\label{ln:thread:cur-intersect-end}
	}
	$Unlock(\mathcal{M}[q_{cur}])$
	\label{ln:thread:mainloop-end}
	}
	$NotifyParent()$
\end{algorithm}

Alg.~\ref{alg:parallel} describes the procedure followed by each thread. It assumes that all threads share an access to the array of slices and the connectivity graph, as well as the book-keeping data, i.e. an array of mutexes, and an array containing the status of each slice. The status array ($\mathcal{S}$) is used to decide what (if anything) should be computed about a given slice at a given moment, whereas the mutex array ($\mathcal{M}$) is used so two threads don't try to alter the same slice at the same time.

The main loop (lines~\ref{ln:thread:mainloop-begin}-\ref{ln:thread:mainloop-end}) proceeds as follows: it pops the first element of the queue $q_{cur}$ and locks its corresponding mutex. If it cannot obtain ownership of the mutex, it waits until this is freed by the other thread. It then verifies the status of the current orientation (lines~\ref{ln:thread:Sqcur-0} and~\ref{ln:thread:Sqcur-1}), if the slice associated to the orientation has yet to be computed it does so (line~\ref{ln:thread:proc-cur}) and if its edges have not yet been processed it goes on to process the edges to its neighboring slices (lines~\ref{ln:thread:cur-intersect-begin}-\ref{ln:thread:cur-intersect-end}). To compute the edges between the slices corresponding to the current orientation $q_{cur}$ and an adjacent orientation $q_{adj}$ the thread needs to also lock the mutex associated to $q_{adj}$ (line~\ref{ln:thread:lock-adj}) if it does so successfully then it proceeds to check the connection between $q_{cur}$ and $q_{adj}$, otherwise it checks the next adjacent orientation. When it has tried to calculate the edges to all the orientations adjacent to $q_{cur}$ it verifies if all of them were successfully calculated (line~\ref{ln:thread:qcur-all-done}), if so, the current orientation is marked as fully processed and the slice is deleted. Note also that the edges between $Slice[q_{cur}]$ and $Slice[q_{adj}]$ only get computed in line $\ref{ln:thread:add-edges}$ if $Slice[q_{adj}]$ has not been fully processed itself, as that would mean that the existing edges would have been processed in a previous step.

This procedure guarantees the absence of data corruption since in order to change the data associated to slices or the edges between them, the corresponding mutexes need to be locked first. Furthermore the algorithm is guaranteed to be lock-free given that:

\begin{itemize}
\item If two threads try to lock the same $q_{cur}$, the second one has to wait for the first one to finish before it can proceed.
\item If there is an edge $(u,v)$ of the orientation graph and one thread has $q_{cur} = v$ and a second thread has $q_{cur}=u$, then given that both instances only \emph{try} to lock the mutex of the corresponding to the other thread's orientation, they will not cause a deadlock.
\end{itemize}

Note also that each thread is guaranteed to terminate since orientations only get enqueued if their status is unseen ($\mathcal{S}[q_{adj}] = 0$ in line~\ref{ln:thread:ifunseen}) and this status is revoked immediately afterwards (line~\ref{ln:thread:markseen}) before it is enqueued. Furthermore since this occurs while the mutex associated to the corresponding slice is locked, no other thread is able to enqueue the same orientation. This guarantees that each orientation is only enqueued once.

Alg.~\ref{alg:parallel} is called by a threadpool as shown in Alg.~\ref{alg:parallel-caller}. The algorithm ComputeConnectivityGraphParallel works by setting up the data that must be shared between the several instances of ConnectivityGraphThread, and calling it starting from different orientations in $\mathcal{Q}$. Once all slices have been constructed the algorithm  proceeds by checking that they have all been completely processed, and otherwise checks the intersections with their remaining neighbors. This step is required in case the situation arises where two threads are processing neighboring orientations simultaneously. In this case they may fail to lock the other's mutex and therefore ignore the edges that may exist between their corresponding vertices. However since this failure implies that neither orientation is marked as fully processed, the corresponding slices are not freed in line~\ref{ln:thread:freeslice} of Alg.~\ref{alg:parallel}, and are therefore  available to be further processed in line~\ref{ln:caller:add-edges} of Alg.~\ref{alg:parallel-caller}.

\begin{algorithm}[htb!]
	\caption{\label{alg:parallel-caller}ComputeConnectivityGraphParallel}
	\SetKwInOut{Input}{input}
	\SetKwInOut{Output}{output}
	\SetKwInOut{Shared}{shared}
	\Input{$O_{obst}$, $O_{obj}$ --- spherical representations of the obstacles and the object \\
		$\delta$ --- clearance parameter. \\
	$\mathcal{Q}$ --- a graph over $SO(3)$ \\
	}
	\Output{
		$\mathcal{G}$ --- a connectivity graph of the free space
	}
	{\bf shared } $\mathcal{G} \gets (\emptyset,\emptyset)$\\
	{\bf shared } $\mathcal{M} \gets mutexArray[|\mathcal{Q}|]$ \\
	{\bf shared } $Slices \gets sliceArray[|\mathcal{Q}|]$ \\
	{\bf shared } $\mathcal{S} \gets intArray[|\mathcal{Q}|] $\\
	\While{ $\exists i : \mathcal{S}[i] = 0 $}{
		$q\gets SelectQuaternion(\mathcal{S})$\\
		$LaunchThread$ $ConnectivityGraphThread(O_{obst},O_{obj},\mathcal{Q},\delta,q)$\\
		$NThreads \gets NThreads-1$ \\
		\If{ $NThreads = 0$ }{
			$WaitForAnyChild()$\\
		}
	}
	$WaitForAllChildren()$\\
	\For{ $q_{cur} \gets \mathcal{Q}$ } {
		\If{ $\mathcal{S}[q_{cur}] = 1$ }{
			\For{$q_{adj} \in \mathcal{Q}.adjacentOrientations(q_{cur})$} {
				\If{ $\mathcal{S}[q_{adj}] = 1$} {
					$AddEdges(Slices[q_{cur}],Slices[q_{adj}])$ \label{ln:caller:add-edges}\\ }
			}
			$Slices[q_{cur}].reset()$ \\
		}
	}
	\Return{$\mathcal{G}$}
\end{algorithm}

The function $SelectQuaternion$ is used to select the next quaternion which has not been  seen before. In our implementation we use a simple scheme by drawing a random number $r$ and choosing the first $i > r$ so that $\mathcal{S}[i]=0$.

\section{Theoretical Properties of Our Approach}
In this section, we discuss correctness, completeness and computational complexity of our approach.

\subsection{Correctness}
First let us show that our algorithm is correct: i.e., if there is no collision-free path between two configurations in our approximation of the free space, then these configurations are also disconnected in the actual free space. 
\begin{proposition}[correctness]
\label{correctness}
Consider an object $\OO$ and a set of obstacles $\SS$. Consider two collision-free configurations of the object. If they are not path-connected in $\mathcal{G}(\afree_\varepsilon(\OO))$, then they are not path-connected in $\cfree(\OO)$.
\end{proposition}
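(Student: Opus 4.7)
\medskip

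\noindent I would argue the contrapositive: assume there exists a continuous path $\gamma:[0,1]\to\cfree(\OO)$ with $\gamma(0)=c_1$ and $\gamma(1)=c_2$, and construct a path in $\mathcal{G}(\afree_\varepsilon(\OO))$ from a vertex whose slice free-space component contains $c_1$ to one containing $c_2$. The first step is to subdivide $\gamma$ into sub-paths that each live in a single slice. Since $\{U_i\}_{i=1}^s$ is an open cover of the compact space $SO(n)$, Lebesgue's number lemma applied to the continuous map $\pi_{SO(n)}\circ\gamma:[0,1]\to SO(n)$ yields a partition $0=t_0<t_1<\dots<t_k=1$ and indices $i_1,\dots,i_k$ with $\pi_{SO(n)}\circ\gamma([t_{j-1},t_j])\subseteq U_{i_j}$ for every $j$. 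In particular each interior break orientation $\theta(t_j)$ lies in $U_{i_j}\cap U_{i_{j+1}}$, which is precisely what will produce graph edges later.

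\medskip

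\noindent Next I would promote each sub-path to the approximation of the corresponding slice. Writing $\gamma(t)=(x(t),\theta(t))$ and fixing $j$, for every $t\in[t_{j-1},t_j]$ we have $\theta(t)\in U_{i_j}=U(\phi_{i_j},\varepsilon)$, so by the $\varepsilon$-core property recalled in Section~3 and the displacement bound of Section~5 we obtain $\OO_\varepsilon^{\phi_{i_j}}\subseteq \OO^{\theta(t)}$, and hence the same inclusion holds once both are translated by $x(t)$. Since $\gamma(t)\in\cfree(\OO)$ means that the superset $x(t)+\OO^{\theta(t)}$ has interior disjoint from the obstacles, Observation~\ref{obs::correctness} applied contrapositively to the subset $x(t)+\OO_\varepsilon^{\phi_{i_j}}$ gives $x(t)\in\cfree(\OO_\varepsilon^{\phi_{i_j}})$. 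Thus the continuous spatial path $x|_{[t_{j-1},t_j]}$ lies entirely in $\cfree(\OO_\varepsilon^{\phi_{i_j}})$, so $x(t_{j-1})$ and $x(t_j)$ belong to the same path-component of $\cfree(\OO_\varepsilon^{\phi_{i_j}})$. Combining $\cfree(\OO_\varepsilon^{\phi_{i_j}})\subseteq \afree(\OO_\varepsilon^{\phi_{i_j}})$ with the connectivity-preservation of the dual-diagram construction recalled in Remark~\ref{exact-graph}, the two endpoints lie in a common connected component $aC^{i_j}$ of $\afree(\OO_\varepsilon^{\phi_{i_j}})$.

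\medskip

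\noindent Assembling the graph path is now immediate. Each pair $(aC^{i_j},U_{i_j})$ is by definition a vertex of $\mathcal{G}(\afree_\varepsilon(\OO))$. Consecutive vertices share the configuration $\gamma(t_j)$: its orientation satisfies $\theta(t_j)\in U_{i_j}\cap U_{i_{j+1}}$ and its position $x(t_j)$ lies both in $aC^{i_j}$ and in $aC^{i_{j+1}}$, so the edge rule implemented in Algorithm~\ref{alg::compute-edges} (which either matches finite components by positional intersection or connects two infinite components by default) emits an edge between every consecutive pair. The resulting chain is a path from a vertex containing $c_1$ to one containing $c_2$, establishing the contrapositive.

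\medskip

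\noindent The main obstacle is the middle step: three independent guarantees must be lined up, namely the uniform $\varepsilon$-core inclusion $\OO_\varepsilon^{\phi_{i_j}}\subseteq \OO^{\theta(t)}$ valid over the entire patch $U_{i_j}$ (furnished by the displacement bound together with the chosen dispersion of the $SO(n)$ sampling), the pointwise application of Observation~\ref{obs::correctness} along $\gamma$, and the connectivity-preservation of the dual-diagram approximation. Together these let us transport a path living in the true free space of the unsimplified object $\OO$ into one that walks through connected components of the approximated free spaces of its $\varepsilon$-cores; once this transport is in place, reading off edges from positional overlaps at the split times $t_j$ is purely combinatorial.
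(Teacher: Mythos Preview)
Your argument is correct and complete. It takes the contrapositive route---given a continuous path $\gamma$ in $\cfree(\OO)$, you manufacture a walk in $\mathcal{G}(\afree_\varepsilon(\OO))$---whereas the paper proves the forward implication directly by first establishing the inclusion $\sfree_{U(\phi_i,\varepsilon)}\subseteq \asfree_{U(\phi_i,\varepsilon)}$ and then arguing that the absence of an edge between two adjacent-slice vertices forces the corresponding approximate components $aC_1\times U_i$ and $aC_2\times U_j$ to be disjoint as subsets of $\R^n\times SO(n)$. Both proofs rest on the same three ingredients (the $\varepsilon$-core containment $\OO_\varepsilon^{\phi_i}\subseteq\OO^\theta$ over the patch, the resulting inclusion $\cfree(\OO^\theta)\subseteq\cfree(\OO_\varepsilon^{\phi_i})$, and the dual-diagram containment $\cfree(\OO_\varepsilon^{\phi_i})\subseteq Dual(\ccol(\OO_\varepsilon^{\phi_i}))$), so the difference is purely in packaging. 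Your use of the Lebesgue number lemma to chop $\gamma$ into single-slice pieces is the natural constructive counterpart to the paper's set-theoretic containment, and it makes the correspondence between continuous paths and graph walks fully explicit; the paper's version is shorter but leaves the passage from ``adjacent components disjoint'' to ``no path in $\cfree(\OO)$'' somewhat implicit. One minor remark: in your middle step the inclusion $\cfree(\OO_\varepsilon^{\phi_{i_j}})\subseteq \afree(\OO_\varepsilon^{\phi_{i_j}})$ alone already forces $x(t_{j-1})$ and $x(t_j)$ into a common component of the approximation, so you do not actually need to invoke the homotopy-equivalence part of the dual-diagram construction (and Remark~\ref{exact-graph} is about spurious edges rather than connectivity preservation).
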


\begin{proof}
Recall that the approximation of the free space is constructed as follows:

\[
\afree_\varepsilon(\OO) = \bigcup^{s}_{i = 1} \asfree_{U(\phi_i,\varepsilon)},
\]
where 
\begin{equation}
\label{eqn::free-slice-dfn}
\asfree_{U(\phi_i,\varepsilon)} = Dual(\ccol(\OO_{\varepsilon}^{\phi_i})) \times U(\phi_i,\varepsilon)
\end{equation}

Now, recall that by definition ${(Dual(\ccol(\OO_{\varepsilon}^{\phi_i})))}^c \subset \ccol(\OO_{\varepsilon}^{\phi_i})$~(\cite{edelsbrunner-skin}), and that we choose $\varepsilon$ and $U(\phi_i,\varepsilon)$ so that for any $\phi \in U(\phi_i,\varepsilon)$, $\ccol(\OO_{\varepsilon}^{\phi_i}) \subset \ccol(\OO^{\phi})$.
This implies that ${(Dual(\ccol(\OO_{\varepsilon}^{\phi_i})))}^c \subseteq \ccol(\OO^{\phi})$ for any $\phi\in U(\phi_i,\varepsilon)$, and conversely that $
\cfree(\OO^{\phi}) \subset {Dual(\ccol(\OO_{\varepsilon}^{\phi_i}))} $. Finally, since $\sfree_{U(\phi_i,\varepsilon)} = \bigcup_{\phi} \cfree(\OO^\phi) \times \{\phi\}$, we have:

\begin{equation}
Sl^{free}_{U(\phi_i,\varepsilon)} \subseteq aSl^{free}_{U(\phi_i,\varepsilon)},
\end{equation}

We now want to show that if there is no path between two vertices $v = (aC, U)$ and $v' = (aC', U')$ in $\mathcal{G}(\afree_{\varepsilon})$, then there is no path between connected components of $\afree_\varepsilon(\OO)$ corresponding to them. It is enough to show that if two vertices corresponding to adjacent slices are not connected by an edge, then they represent two components which are disconnected in $\sfree_{U}\cup\sfree_{U'}$.

Consider two adjacent slices $Sl_{U(\phi_i,\varepsilon)}$ and $Sl_{U(\phi_{j},\varepsilon)}$, and two path-connected components $C_1 \subset aSl^{free}_{U(\phi_i,\varepsilon)}$ and $C_2 \subset aSl^{free}_{U(\phi_{j},\varepsilon)}$. Let $aC_1$ and $aC_2$ be their respective representations as unions of balls.

Let $v_1$ and $v_2$ be the vertices of $\mathcal{G}(\afree_{\varepsilon}(\OO))$ corresponding to these components: $v_1 = (aC_1, U(\phi_i,\varepsilon))$ and $v_2 = (aC_2, U(\phi_{j},\varepsilon))$. By construction, these are adjacent slices, therefore $U(\phi_i,\varepsilon)\cap U(\phi_j,\varepsilon) \neq \emptyset$ and since there is no edge between $v_1$ and $v_2$, we get $aC_1\cap aC_2 = \emptyset$. But, by construction $C_1 \subseteq aC_1\times U(\phi_i,\varepsilon)$ and $C_2 \subseteq aC_2\times U(\phi_j,\varepsilon)$, therefore, we get:

\[
    C_1 \cap C_2\subseteq \left(aC_1\times U(\phi_i,\varepsilon)\right) \cap \left(aC_2\times U(\phi_j,\varepsilon)\right) = \emptyset
\]

And so $C_1$ and $C_2$ are disjoint in the union of the corresponding slices, which concludes the proof. 
\end{proof}


\subsection{$\delta-$Completeness}

Now, we show that if two configurations are not path-connected in $\cfree(\OO)$, we can construct an approximation of $\cfree(\OO)$ in which these configurations are either disconnected or connected by a narrow passage. 

\begin{proposition}[$\delta$-completeness]
\label{completeness}
Let $c_1, c_2$ be two configurations in $\cfree(\OO)$. If they are not path-connected in $\cfree(\OO)$, then for any $\delta > 0$ there exists $\varepsilon > 0$ such that the corresponding configurations are not path-connected in $\mathcal{G}(\afree_\varepsilon(\OO_{+\delta}))$, where the graph is produced according to the procedure outlined in Rem.~\ref{exact-graph}.
\end{proposition}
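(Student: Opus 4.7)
My strategy is a monotonicity reduction followed by a compactness-based limit argument on paths in the approximation.

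Step one is a monotonicity observation. Since $\OO \subseteq \OO_{+\delta}$, we have $\ccol(\OO) \subseteq \ccol(\OO_{+\delta})$ and therefore $\cfree(\OO_{+\delta}) \subseteq \cfree(\OO)$. Under the mild assumption that $c_1, c_2 \in \cfree(\OO_{+\delta})$ (otherwise the configurations do not lie in the ambient space of $\afree_\varepsilon(\OO_{+\delta})$ and the conclusion is vacuous), disconnection of $c_1$ and $c_2$ in the larger free space $\cfree(\OO)$ immediately yields their disconnection in the smaller $\cfree(\OO_{+\delta})$. It therefore suffices to show that for all $\varepsilon > 0$ sufficiently small, paired with an $SO(n)$-sampling whose dispersion tends to zero, $c_1$ and $c_2$ are disconnected in $\mathcal{G}(\afree_\varepsilon(\OO_{+\delta}))$.

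Step two is the contradiction. Suppose that along some sequence $\varepsilon_n \downarrow 0$, with accompanying dispersion $\Delta_n \to 0$, $c_1$ and $c_2$ remain path-connected in each graph $\mathcal{G}(\afree_{\varepsilon_n}(\OO_{+\delta}))$. By Rem.~\ref{exact-graph}, each edge in a connecting graph path corresponds to a genuine nonempty overlap in the true free space of two adjacent slice approximations, so each graph path lifts to a continuous polygonal path $\hat\pi_n : [0,1] \to \cc$ from $c_1$ to $c_2$ whose slicewise trace avoids $\ccol(((\OO_{+\delta})_{\varepsilon_n})^{\phi_i})$ and whose slice transitions occur inside the corresponding overlaps. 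Restricting each $\hat\pi_n$ to come from a simple graph path and reparametrizing to uniform Lipschitz constants on a compact region of $\cc$ enclosing the traversed slice supports, Arzel\`a--Ascoli extracts a subsequence converging uniformly to a continuous limit $\hat\pi : [0,1] \to \cc$ from $c_1$ to $c_2$. Since the $\varepsilon_n$-cores of $\OO_{+\delta}$ converge to $\OO_{+\delta}$ in Hausdorff distance, uniformly over the sampled orientations, and since Rem.~\ref{exact-graph}'s exact edge test guarantees that each $\hat\pi_n$ truly avoids the collision space of the $\varepsilon_n$-core rather than merely its dual-diagram thickening, the limit $\hat\pi$ avoids $\ccol(\OO_{+\delta})$ and therefore lies in $\cfree(\OO_{+\delta})$. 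This contradicts step one.

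The main obstacle is precisely the limit extraction. One must jointly control $\varepsilon_n$ and $\Delta_n$ so that the free-space ``thickening'' induced both by the $\varepsilon$-core and by the orientation sampling vanishes, while still producing uniformly equicontinuous path representatives $\hat\pi_n$, even though the simple graph paths may traverse a growing number of slices. Rem.~\ref{exact-graph}'s exact edge test is essential here: without it, the lifted paths would only be guaranteed to lie in the dual-diagram thickening, which protrudes into $\ccol(\OO_{+\delta})$, so the Hausdorff convergence of cores would not translate to uniform collision-avoidance in the limit.
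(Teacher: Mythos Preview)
Your proposal has a genuine gap at the Arzel\`a--Ascoli step. Restricting to simple graph paths and ``reparametrising to uniform Lipschitz constants'' does not furnish equicontinuity: as $\varepsilon_n,\Delta_n\to 0$ the number of slices grows without bound, a simple path in $\mathcal{G}(\afree_{\varepsilon_n}(\OO_{+\delta}))$ may visit unboundedly many vertices, and within each slice component $C_j$ there is no a priori bound on the length of an arc joining two prescribed overlap points. Nothing you offer controls the total length of $\hat\pi_n$, so the Lipschitz constants after reparametrisation can blow up and Arzel\`a--Ascoli does not apply. You flag this as ``the main obstacle'' but provide no mechanism to overcome it. A related structural issue is that your Step~1 throws away the quantitative role of $\delta$: after passing to disconnection in $\cfree(\OO_{+\delta})$, your Step~2 amounts to proving $0$-completeness for the object $\OO_{+\delta}$, which is strictly stronger than what the proposition asserts and has no reason to be more tractable.

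The paper avoids any compactness argument by using $\delta$ directly. It notes that every path in $\cc$ from $c_1$ to $c_2$ must cross a configuration $c$ with $\dsign(c(\OO),\SS)<0$, hence $\dsign(c(\OO_{+\delta}),\SS)<-\delta$; it then bounds how much the signed distance can increase when one replaces $\OO_{+\delta}$ at orientation $\phi$ by its $\varepsilon$-core at the nearby sampled orientation $\phi_i$, obtaining a quantity of order $\varepsilon\bigl(1+(\delta-\varepsilon)/\rad(\OO)\bigr)$. Requiring this to stay below $\delta-\varepsilon$ gives an explicit interval of admissible $\varepsilon>0$ for which every such $c$ remains in collision in every slice covering it, so no graph path in $\mathcal{G}(\afree_\varepsilon(\OO_{+\delta}))$ can join $c_1$ to $c_2$. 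The $\delta$-offset is precisely the budget that absorbs the $\varepsilon$-core shrinking and the orientation discretisation error; by spending it in Step~1 you forfeit the leverage that turns the paper's argument into a short explicit inequality rather than a delicate limit.
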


In proving this proposition we make used of the notion of the signed distance between two sets:

\[
\bar\dsign(\OO,\SS) =
\begin{cases}
	\min_{p\in O} d(p,\SS) & \textrm{if } \OO\cap\SS\neq\emptyset \\
	-\max_{p\in \OO\cap\SS} d(p,\partial\SS) & \textrm{otherwise}
\end{cases}
\]

Where $\partial\SS$ denotes the boundary of $\SS$. Note that $\bar\dsign(A,B)$ is not necessarily the same as $\bar\dsign(B,A)$ so we instead consider $\dsign(A,B) = \min(\bar\dsign(A,B),\bar\dsign(B,A))$.

\begin{proof}

    Recall Rem.~\ref{exact-graph} that there is an edge between vertices $(aC_1,\phi_1)$, $(aC_2,\phi_2)$ only if $U(\phi_1,\varepsilon)$ overlaps with $U(\phi_2,\varepsilon)$ and $C_1$ overlaps with $C_2$ where $C_i = aC_i\cap {\cfree(\OO_\varepsilon^{\phi_i})}$ for $i = 1,2$ i.e.~the components of the actual free space of $\OO_\varepsilon^{\phi_i}$ ($i=1,2$) corresponding to the approximations $aC_1$ and $aC_2$.


Recall now that we want to prove that for a pair of configurations $c_1,c_2$ which are not path-connected in $\cfree(\OO)$, then for any $\delta>0$ there exists some $\varepsilon > 0$ so that they are not path-connected in $\mathcal{G}(\afree_{\varepsilon}(\OO_{+\delta}))$ for.
Therefore, we start by noting that since $c_1$ and $c_2$ are not path-connected there exists a collision configuration $c$ in any path between them, which implies $\dsign(c(\OO),\SS) < 0$. Thus, for the same configuration $c$ we have $\dsign(c(\OO_{+\delta}),\SS) < -\delta$.

To see that this will result in path non-existence, we take an arbitrary $\varepsilon > 0$ and consider the collision space $\acol_{\varepsilon}(\OO_{+\delta})$. Further, we let $c = (p,\phi) \in \R^3\times SO(3)$, and for any $\phi_i$ such that $\phi\in U(\phi_i,\varepsilon)$, we define $c_i = (p,\phi_i)$. Finally, we restrict ourselves to the case where $\varepsilon < \delta$ and define $\delta' = \delta-\varepsilon$, then we get:

\begin{align*}
    \dsign(c_i(\OO_{+\delta'}),\SS)
    & \leq d(c(\OO_{+\delta'}), c_i(\OO_{+\delta'})) + \dsign(c(\OO_{+\delta'}),\SS) \\
    & \leq \varepsilon\left( 1 + \frac{\delta'}{\rad(\OO)}\right) - \delta'\\
\end{align*}
Where the term $\varepsilon\left( 1 + \frac{\delta'}{\rad(\OO)}\right)$ corresponds to the maximum displacement of the object offset $\OO_{+\delta}$ associated to a rotation of $\OO$ with maximum displacement $\varepsilon$.
Which implies that, as long as we choose $\varepsilon$ such that
\[
\varepsilon\left( 1 + \frac{\delta - \varepsilon}{\rad(\OO)}\right) - (\delta - \varepsilon) < 0
\]
then $c$ is in collision and the proof will be concluded. To see that this inequality has a solution with $0<\varepsilon<\delta$ we simply note that with $\varepsilon = 0$ it is reduced to $-\delta  < 0$. And the roots of the equation in terms of $\delta$ are given by:
{\small
\[
\varepsilon =
\frac{\rad(\OO)}{2}\left(2 + \frac{\delta}{\rad(\OO)}
\pm\sqrt{\left(2 + \frac{\delta}{\rad(\OO)}\right)^2 - \frac{4\delta}{\rad(\OO)}}\right)
\]
}
which are both positive. Finally call the smallest of the two roots $\tilde\varepsilon$ and the inequality holds as long as $\varepsilon \in (0,\min\{\tilde\varepsilon,\delta)\}$, concluding the proof.
\end{proof}


\subsection{Computational complexity} 
Let us now discuss the total computational complexity of the algorithm. Let $s$ be the number of slices, and let $n$ and $m$ be the number of balls in the object's and the obstacle's spherical representation, respectively. We have pre-computed the grids over $SO(3)$ corresponding to different dispersion values, and  therefore we are only interested in the complexity of the connectivity graph construction.  For each slice, we execute two computationally expensive procedures: we compute a weighted Voronoi diagram of the collision space, which allows us to extract the balls representation of the free space, and then for each slice we compute its intersections with adjacent slices. In practice, each orientation in $\mathcal{Q}$ has around $20$ adjacent orientation values, so each slice has around $20$ neighbours\footnote{This is the case for $SO(3)$, in the case of $SO(2)$ there are exactly $2$ neighbours.}.

In CGAL representation, the regular triangulation contains the corresponding weighted Voronoi diagram. Note that a weighted Voronoi diagram can be constructed by other means using for example the algorithm from~(\cite{aurenhammer}). The complexity of this step is $O(n^2m^2)$. The computation of the connected components of each slice is linear on the number of balls in the dual diagram, which makes the overall complexity of this step $O(n^2m^2)$. 

The complexity of finding the intersections between two connected components belonging to different slices 
 $O(b \log^3(b)+k)$ in the worst-case~\cite{zomorodian}, where $b$ is the number of balls in both connected components, and $k$ the output complexity, i.e., the number of pairwise intersections of the balls.

The complexity of the final stage of the algorithm --- computing connected components of the connectivity graph --- is linear on the number of vertices, and can be expressed as $O(s \ c)$, where $c$ is the average number of connected components per slice (a small number in practice).

%
%
%


\section{Examples and Experiments}
\label{experiments}

\begin{figure*}[htb!]
\centering
\subfloat{\includegraphics[width=3cm]{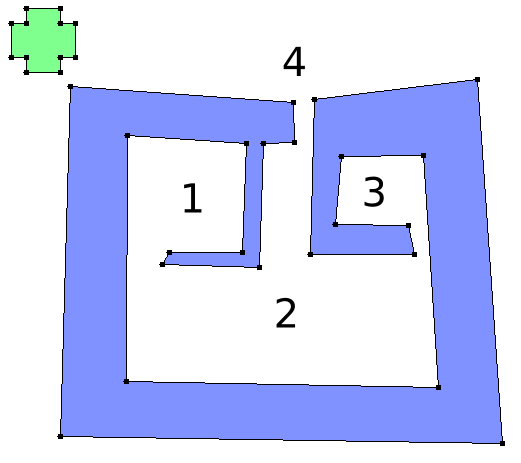}}
\subfloat{\includegraphics[width=3.3cm]{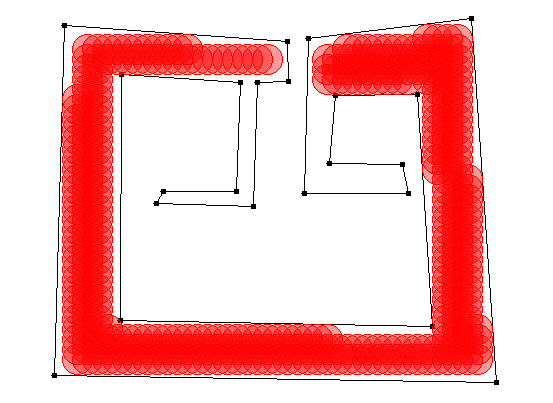}} 
\subfloat{\includegraphics[width=3.3cm]{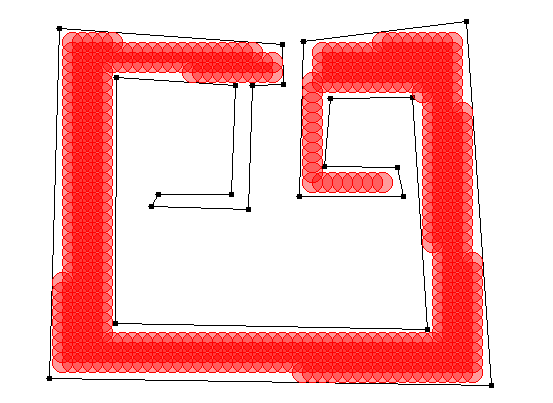}}
\subfloat{\includegraphics[width=3.3cm]{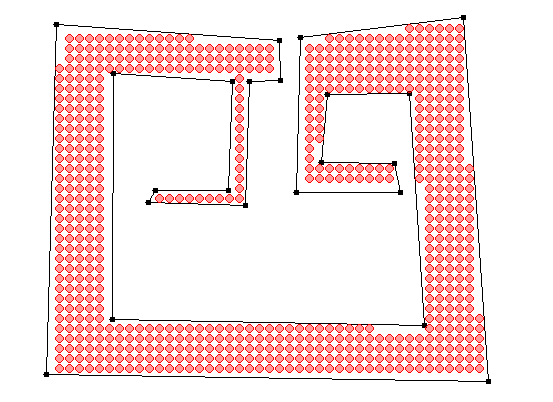}}
\caption{\label{examples2D} In the left we present a set of 2D obstacles in blue, and an object in green. The numbers represent the connected components of the free space. In red are three approximations of the obstacles by sets of balls of radius 15, 10, and 4, respectively. Note that the smaller the radius the more features from the original configuration space are preserved.
Note that the first approximation significantly simplifies the shape, and has only one narrow passage; the second approximation preserves the shape better and has two narrow passages; the third approximation preserves all the important shape features of the obstacles.}
\end{figure*}

In this section, we test our algorithm on 2D and 3D objects. In the first experiment, we investigate how the quality of the spherical approximation of the workspace  affects the output of our algorithm. In the next experiment, we test our algorithm on a set of 3D objects representing different classes of geometric shapes. In our final experiment we investigate how the performance of our parallel implementation changes with respect to different numbers of threads.

\subsection{Object and obstacle approximation}

A key requirement of our method is the approximation of both objects and obstacles as unions of balls. This presents a challenge especially when dealing with 3D objects. For 2D experiments we were able to obtain good results using a simple grid for the centers of the balls, and choosing the radius of the balls at various levels. This resulted in the approximations of the obstacle seen in Fig.~\ref{examples2D} and the results in Tab.~\ref{time} which will be analyzed in the next section. Simply put this example shows that using this approach, the smaller the radius of the balls used in the approximation the higher detail we are able to capture. However this method can be detrimental to approximate the object, because we use an $\varepsilon$-core, where $\varepsilon$ depends on the dispersion of the grid over $SO(n)$ and the radius of the object and may therefore exceed the necessary ball radius to approximate all the required details.

A more suitable approach to approximate the object is to compute the medial axis of the object using for example the method from~\citep{dey-medialaxis-pointcloud} which is able to construct it from a pointcloud of the object. Alternatively one can use the method from~\citep{dey-medialaxis-mesh} which is able to retrieve a skelleton of the object (i.e. a 1-dimensional subset of the medial axis). In our experiments we used a water-tight mesh of the drill (see Fig~\ref{examples3Dreal}) to extract its skelleton axis and obtained an approximation of the drill by sampling ball centers in the skelleton with the largest radius that did not contain any point in the mesh.

Finally, for the mug and the bugtrap (also Fig.~\ref{examples3Dreal}) we traced a slice of the object and manually fit the medial axis of the slice, which we used to create a surface of revolution where the ball centers were placed (bugtrap and body of the mug).

\subsection{2D scenario: Different Approximations of the Workspace}

In this experiment, we consider how the accuracy of a workspace spherical approximation affects the output and execution time of the algorithm, see Fig.~\ref{examples2D}. This experiment was performed on a single thread of Intel Core i7 processor.

\begin{table}[htb!]
\centering
\begin{tabular}{|@{~} c @{~}|@{~} r r @{~}|@{~} r r @{~}|@{~} r r @{~}|}
\hline
  $\varepsilon$ & \multicolumn{2}{c|@{~}}{$R = 15$} & \multicolumn{2}{c|@{~}}{$R = 10$} & \multicolumn{2}{c|}{$R = 4$}   \\
\hline
   $0.30 \cdot r$ & 2 c.& 741 ms & 3 c.& 1287 ms & 4 c.& 1354 ms \\
   $0.33 \cdot r$ & 2 c.& 688 ms & 3 c.& 1208 ms & 4 c.& 1363 ms \\
   $0.37 \cdot r$ & 2 c.& 647 ms & 3 c.& 1079 ms & 4 c.& 1287 ms \\
   $0.40 \cdot r$ & 2 c.& 571 ms & 3 c.&  986 ms & 3 c.& 1156 ms \\
   $0.43 \cdot r$ & 2 c.& 554 ms & 3 c.&  950 ms & 3 c.& 1203 ms \\
\hline
\end{tabular}
\caption{\label{time}
We report the number of path-connected components we found and the computation time for each case. When we were using our first approximation of the workspace, we were able to distinguish only between components 4 and 2 (see Fig.~\ref{examples2D}), and therefore prove path non-existence between them. For a more accurate approximation, we were also able to detect component 3. Finally, the third approximation of the workspace allows us to prove path non-existence between every pair of the four components.}
\end{table}

For our experiments, we generate a workspace and an object as  polygons, and approximate them with unions of balls of equal radii lying strictly inside the polygons. Note that the choice of the radius is important: when it is small, we get more balls, which increases the computation time of our algorithm; on the other hand, when the radius is too large, we lose some important information about the shape of the obstacles, because narrow parts cannot be approximated by large balls, see Fig.~\ref{examples2D}.

We consider a simple object whose approximation consists of 5 balls. We run our algorithm for all the 3 approximations of the workspace, and take 5 different values of $\varepsilon$, see Tab.~\ref{time}. We can observe that as we increase the $\varepsilon$ the computation time decreases. This happens because we are using fewer slices. However, we can also observe that when the $\varepsilon$ is too large, our approximation of the collision space becomes too small, and we are not able to find one connected component (see the last column of Tab.~\ref{time}).

 In our opinion, the accuracy of a workspace approximation should depend on the application scenario as well as on the amount of avalibale computational resources.

\subsection{3D scenario: Different Types of 3D Caging}

\newcolumntype{C}{>{\centering\arraybackslash} m{6cm} }
\newcolumntype{D}{>{\centering\arraybackslash} m{4cm} }

\begin{figure*}[htb!]
\centering
\begin{tabular}{DDD}
\includegraphics[height=2.4cm]{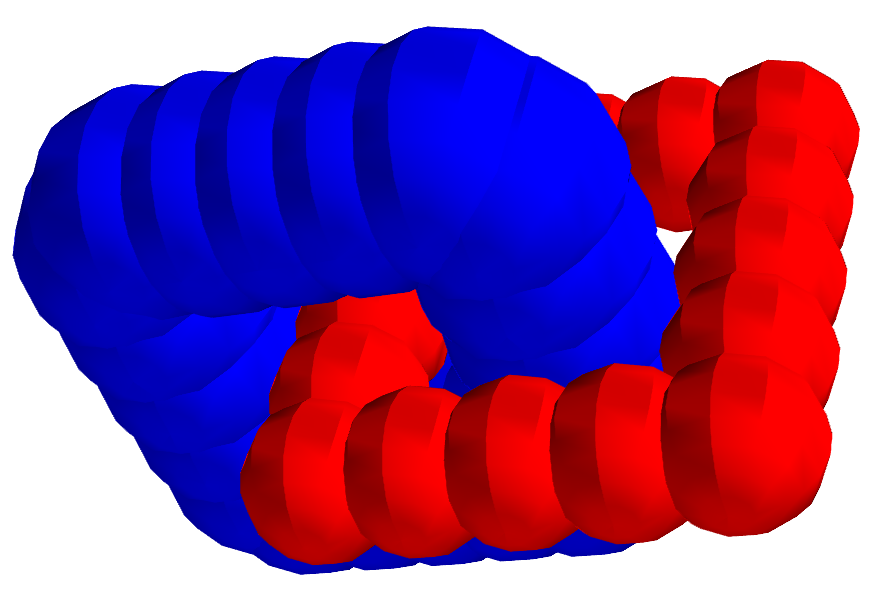} &
\includegraphics[height=3.2cm]{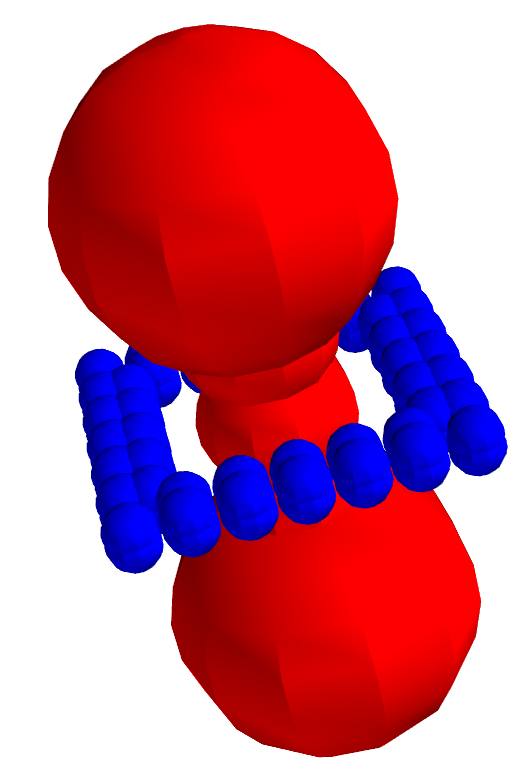} &
\includegraphics[height=3.2cm]{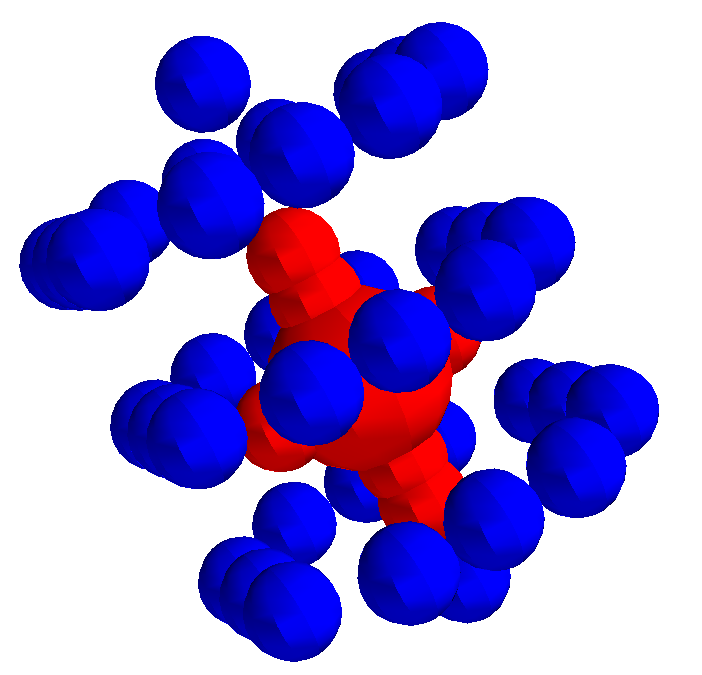}
\end{tabular}
\caption{\label{examples3D} Different 3D caging scenarios. From left to right: \emph{linking-based caging}~(\cite{pokorny,stork2013b}), \emph{narrow part-based caging}~(\cite{varava_2,icra_caging}), \emph{surrounding-based caging}.
In linking-based and narrow part-based caging the obstacles form a loop (not necessarily closed) around a handle or narrow part of the object. In surrounding-based the object is surrounded by obstacles so as not to escape. 
}
\end{figure*}

As we have mentioned in Sec.~\ref{related-work}, a number of approaches towards 3D caging is based on identifying shape features of objects and deriving sufficient caging conditions based on that. By contrast, our method is not restricted to a specific class of shapes, and the aim of this section is to consider examples of objects of different types and run our algorithm on them. The examples are depicted on Fig.~\ref{examples3D}, and Tab.~\ref{results} reports execution time for different resolutions of the $SO(3)$ grid.

\begin{table*}[htb!]
\centering
\begin{tabular}{| r | r r | r r | r r |}
  \hline
  \multicolumn{1}{|c|}{\#slices}& \multicolumn{2}{c|}{rings (320)} & \multicolumn{2}{|c|}{narrow part (480)}& \multicolumn{2}{|c|}{surrounding (266)} \\
  \hline
        576 &   5 s & $\delta = 3.153$ &   8 s & $\delta = 5.041$ &   9 s & $\delta = 1.390$ \\
      4 608 &  45 s & $\delta = 0.958$ &  78 s & $\delta = 1.532$ &  95 s & $\delta = 0.422$ \\
     36 864 & 485 s & $\delta = 0.000$ & 674 s & $\delta = 0.000$ & 875 s & $\delta = 0.000$ \\
  \hline
\end{tabular}
\caption{\label{results}
Results from running 3D experiments on the objects shown on Fig.~\ref{examples3D} using 3 different resolutions for the $SO(3)$ grid with the non-parallelized algorithm. The number of balls used to approximate the collision space of each model is indicated in parenthesis next to the model name. We report the computational time and the value of clearance $\delta$ used to test whether there is a narrow passage of that clearance for each case. The varying clearance corresponds to using the same value for epsilon on each grid.}
\end{table*}

In all cases (Tab.~\ref{results}), the object was shown to be caged  when using the third grid. In the case of the narrow part example the algorithm was not able to discern a cage using the first grid. In the rings example it was not able to find the cage with either the second or third grids.

Results that are based on low-resolution grids can in certain cases be inaccurate for two reasons. First, the they have higher dispersion value, which implies that we need to use a larger value of $\varepsilon$. This results in a smaller $\varepsilon-$core of the object which can escape from a cage even if the entire object cannot. This effect can be mitigated by considering a sufficient clearance $\delta$. Thus, depending on the desired accuracy of the resulting approximation, different numbers of slices can be used. In our current implementation, we precompute grids on $SO(3)$ using the algorithm by~\cite{yershova}, and in our experiments we consider 3 different predefined resolutions. Using a different $SO(3)$-discretization strategy and given a concrete clearance value $\delta$, one can potentially construct a grid on $SO(3)$ with the necessary resolution based on $\delta$.

Another reason for inaccuracies in the low-resolution grid is the fact that the distance between neighboring orientations is so large that adjacent slices might have drastic differences in the topology of their free space. This may lead to adding edges between their connected components that would not be connected if we considered intermediate orientations as well in the case of a more fine grid.

As we see, our algorithm can be applied to different classes of shapes. It certainly cannot replace previous shape-based algorithms, as in some applications global geometric features of objects can be known a priori, in which case it might be faster and easier to make use of the available information. However, in those applications where one deals with a wide class of objects without any knowledge about shape features they might have, our algorithm provides a reasonable alternative to shape-based methods.

\subsection{Parallel implementation}

Our parallel implementation allows one to run experiments on more complex models of real world objects. In this section, we consider a few objects representing different geometric features, see Fig.~\ref{examples3Dreal}. These examples are inspired by manipulation and motion planning applications. The first two models, a mug and a drill, are caged by a Schunk dexterous hand. The mug and the drill illustrate linking-based and narrow part-based caging types, respectively. The third example is a bugtrap and a small cylindrical object. The bugtrap environment is a well-known path planning benchmark created by Parasol MP Group, CS Dept, Texas A \& M University. The bugtrap does not cage the object, but contains a narrow passage. This example corresponds to the ``surrounding'' type of restricting the object's mobility.

First, we observe how the performance of our algorithm changes with respect to the number of slices we consider. Table~\ref{results-real} reports the results. The respective objects are depicted on Fig.~\ref{examples3Dreal}. We report the computational time needed for different resolutions of the orientation grid, as well as the $\delta$ value passed to the algorithm. The different $\delta$ values correspond to mantaining a constant value of $\varepsilon$ across the different grids. In all of the examples, the algorithm was able to detect compact connected components, thus indicating that the objects are caged with clearance $\delta$. In the case of a bugtrap,  we were able to detect a narrow passage whose width is at most $0.2$. However, when running the same example with clearance parameter $\delta = 0$, the algorithm returned a space approximation with a single connected component, thus indicating that there are no caging configurations. Thus, while the object is not caged by the bugtrap, its free space contains a narrow passage as expected.

\begin{figure*}[htb!]
\centering
\begin{tabular}{DDC}
{\includegraphics[height=3cm]{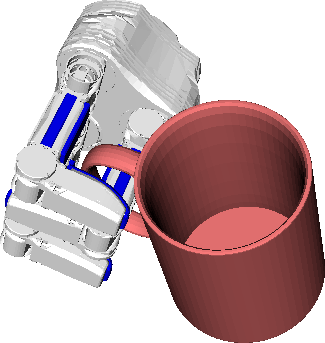}} & 
{\includegraphics[height=3cm]{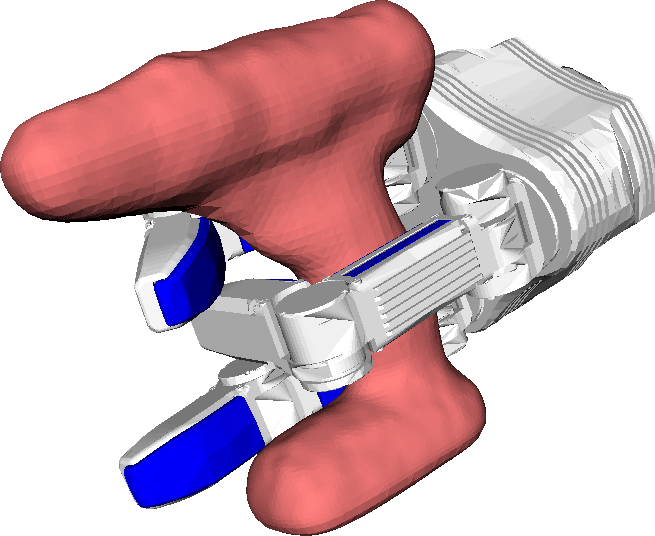}} &
{\includegraphics[height=3cm]{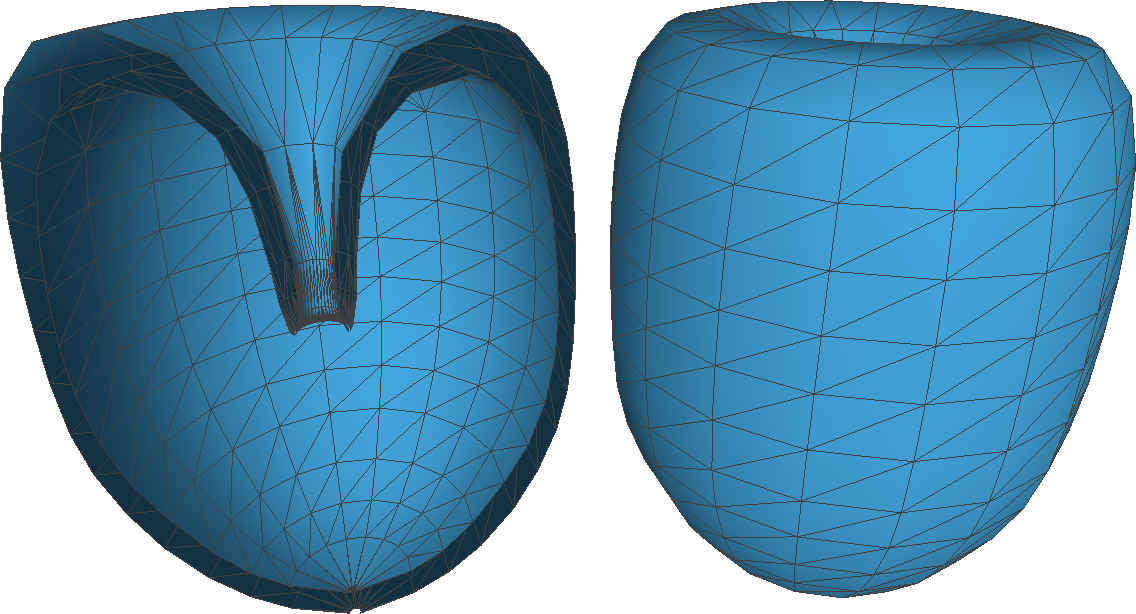}} \\
{\includegraphics[height=2.2cm]{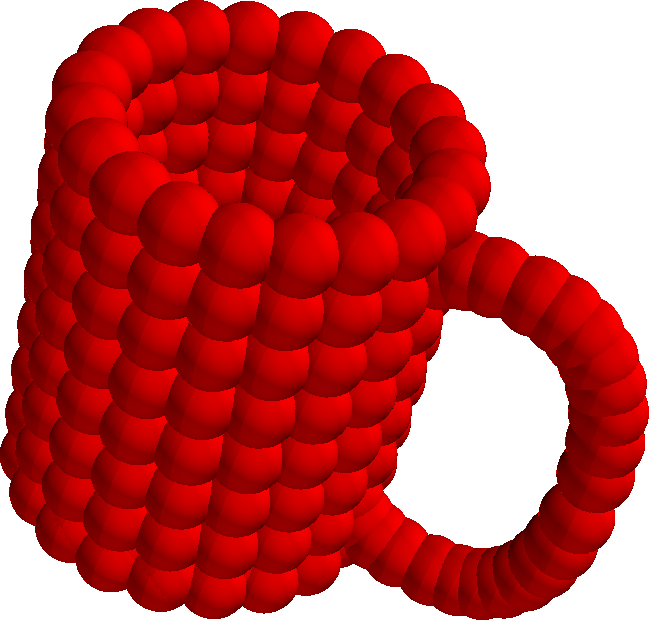}} &
{\includegraphics[height=2.4cm]{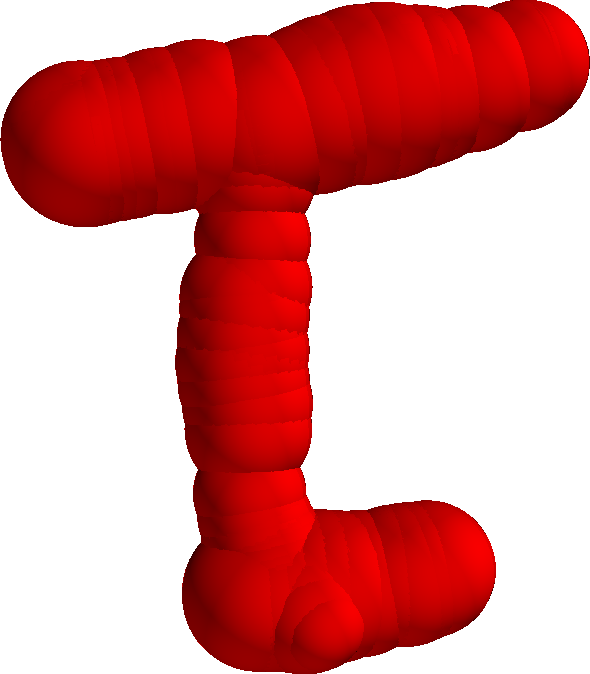}} &
{\includegraphics[height=.3cm]{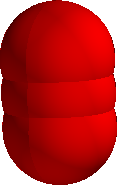}}  \\
{\includegraphics[height=2.5cm]{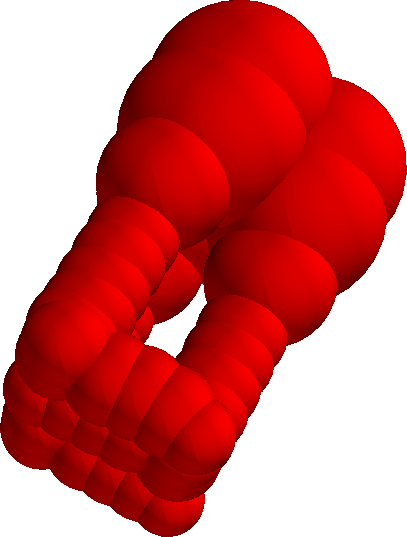}} &
{\includegraphics[width=3cm]{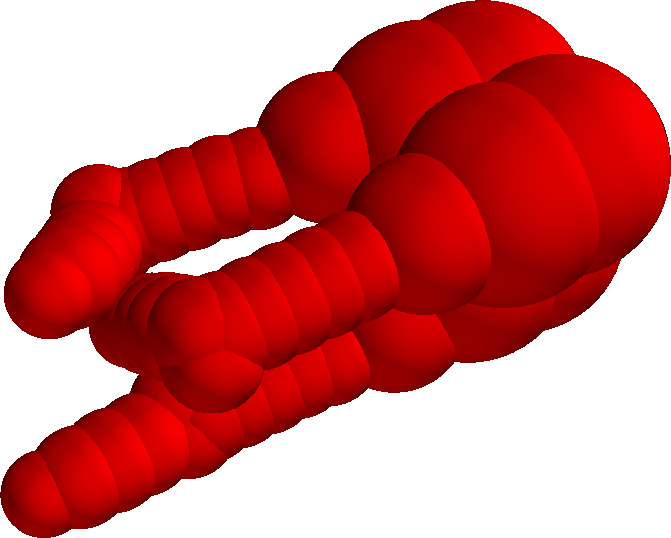}} &
{\includegraphics[height=3cm]{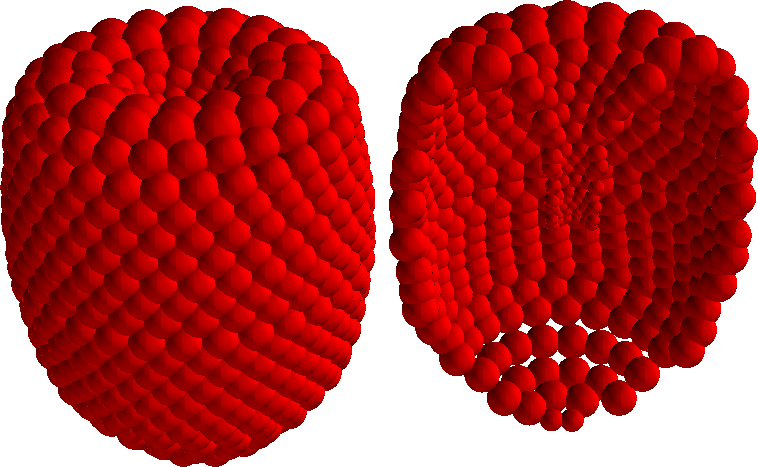}}
\end{tabular}
\caption{\label{examples3Dreal}  Caging real-world objects with a 3 finger Schunk hand. First row, from left to right: a mug (linking-based caging), its representations as a union of balls, a drill (narrow part-based caging), a bugtrap (caging by surrounding). Second row: the respective spherical representations of the objects. Third row: a spherical representation of the Schunk hand in the caging configurations used above; an object passing through the bugtrap.}
\end{figure*}

\begin{figure}[htb]
\centering
\includegraphics[width=.8\columnwidth]{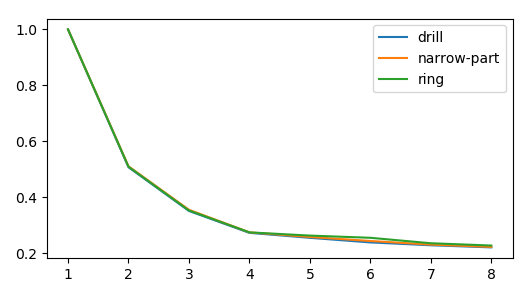}
\caption{\label{fig:threads}  Relative timing for calculating the configuration space of objects with different numbers of threads. The vertical axis measures time in fraction of time taken by computing with a single thread. The horizontal axis indicates the number of threads. The data corresponds to that presented in Tab.~\ref{results-parallel}.}
\end{figure}

\begin{table*}[htb!]
\centering
\begin{tabular}{| r | r r | r r | r r |}
  \hline
\multicolumn{1}{|c|}{\#slices}& \multicolumn{2}{c|}{ mug (9198 balls) }& \multicolumn{2}{c|}{drill (2646 balls)} & \multicolumn{2}{c|}{bugtrap ( 2079 balls) } \\
\hline
	   576	&    36 s & $\delta = 0.041$ &  12 s & $\delta = 0.044$ &   169 s & $\delta = 0.756$ \\
	 4 608	&   230 s & $\delta = 0.012$ & 110 s & $\delta = 0.013$ & 1 095 s & $\delta = 0.369$ \\
	36 864	& 1 765 s & $\delta = 0.000$ & 784 s & $\delta = 0.000$ & 8 263 s & $\delta = 0.200$ \\
\hline
\end{tabular}
\caption{\label{results-real}
Results from running 3D experiments with objects shown on in Fig.~\ref{examples3Dreal} using 3 different resolutions for the $SO(3)$ grid. The number of balls used to approximate the collision space of each model is indicated in parenthesis next to the model name.}
\end{table*}

\begin{table}[htb!]
\centering
\begin{tabular}{| c | c | c | c |}
  \hline			
  \#threads & ring & narrow part & drill \\
  \hline
	1 & 95 s & 79 s & 769 s \\
	2 & 48 s & 40 s & 390 s \\
	3 & 33 s & 28 s & 270 s \\
	4 & 26 s & 22 s & 210 s \\
	5 & 25 s & 20 s & 196 s \\
	6 & 24 s & 19 s & 184 s \\
	7 & 22 s & 18 s & 176 s \\
	8 & 22 s & 18 s & 170 s \\
%
%
\hline  
\end{tabular}
\caption{\label{results-parallel}
In this table we show the results from running the experiments on the ring, and narrow part toy examples, as well as on the drill objects, with one to eight threads. These examples stem from using our parallel algorithm (Alg.~\ref{alg:parallel-caller}) with the objects and obstacles depicted in Fig.~\ref{examples3D} and Fig.~\ref{examples3Dreal}, and a grid over $SO(3)$ comprising 4608 nodes.}
\label{table:threads-results}
\end{table}

Now, we analyze how the performance of the parallelized version of our algorithm changes with respect to the number of threads we use. Table~\ref{table:threads-results} presents the results of this experiment. We run our algorithm using from one to eight threads on three different object models and report the respective computational time. In all examples we used 4608 slices (second grid) and clearance $\delta=0$. We observe a drastic increase of performance up to four threads, and a slight increase when using four to eight threads (see Fig.~\ref{fig:threads}).

\section{Discussion and Future Work} 

In this paper, we provide a computationally feasible and provably-correct algorithm for 3D and 6D configuration space approximation and use it to prove caging and path non-existence. We analyze theoretical properties of the algorithm such as correctness and complexity. We provide a parallel implementation of the algorithm, which allows us to run it on complex object models. In the future, we see several main directions of research that would build upon these results.

\subsection{Molecular Cages}

In organic chemistry, the notion of caging was introduced independently of the concept of~\cite{Mitra2013}. Big molecules or their ensembles can form hollow structures with a cavity large enough to envelope a smaller molecule. Simple organic cages are relatively small in size and therefore can be used for selective encapsulation of tiny molecules based on their shape~\cite{Mitra2013}.  In particular, this property can be used for storage and delivery of small drug molecules in living systems~\cite{rother}. By designing the caging molecules one is able to control the formation and breakdown of a molecular cage, in such a way as to remain intact until it reaches a specific target where it will disassemble releasing the drug molecule. 

An example on Fig.~\ref{chemistry} shows that in principle, our algorithm can be applied to molecular models, assuming they can be treated as rigid bodies. In this example, we consider two molecules to see whether our algorithm can predict their ability to form a cage. Atomic coordinates and van der Waals radii were retrieved from the Cambridge Crystallographic Data Centre (CCDC) database. The depicted molecules are mesitylene (left, CCDC 907758) and CC1 (right, CCDC 707056) (\cite{Mitra2013}). Our algorithm reported that this pair forms a cage, as experimentally determined in~\cite{Mitra2013}. In our future work, we aim to use our algorithm to test a set of potential molecular carriers and ligands to find those pairs which are likely to form cages. This prediction can later be used as a hypothesis for further laboratory experiments.

\subsection{Integration with Path Planners}
Another possible direction of future work is integration of our approach with sampling-based path planning algorithms. Since the problems of path planning and path non-existence verification are dual to each other, we plan to design a framework where they will be addressed simultaneously: a path planner will attempt to construct a collision-free path as usual, while configuration space approximation will be constructed independently. If there is no path, our configuration space approximation can be used to rigorously demonstrate this instead of relying on heuristical stopping criteria for the planner. Furthermore, we can leverage our approximation to guide sampling, which can be particularly beneficial in the presence of narrow passages. Namely, having an approximation of the narrow regions of the free space, the planner can sample configurations from it.

 \subsection{Energy-Bounded Caging}
 \cite{mahler, mahler_2} defined the concept of energy-bounded caging, when obstacles and external forces (such as gravity, or forces directly applied to the object) complement each other in restricting the object's mobility. The authors proposed an approach towards synthesizing energy-bounded cages in 2D.
 
 To directly extend their method to 3D workspaces one would need to represent the configuration space as a 6D simplicial complex, which is expensive in terms of both required memory and execution time ~(\cite{mahler}). Apart from that, the computation time in their case is dominated by sampling the collision space, and the authors suggest that in the 3D case the necessary number of samples might be even higher.
 
 Analogously to the works by ~\cite{mahler, mahler_2}, we can model external forces as potential functions and assign potential values to each ball in our approximation of the free space. In each slice, we consider balls with high potential values as ''forbidden" parts of the free space, and compute connected components of the rest. In the future, we plan to investigate this direction.

\begin{figure}[htb!]
\centering
\includegraphics[width=0.6\linewidth]{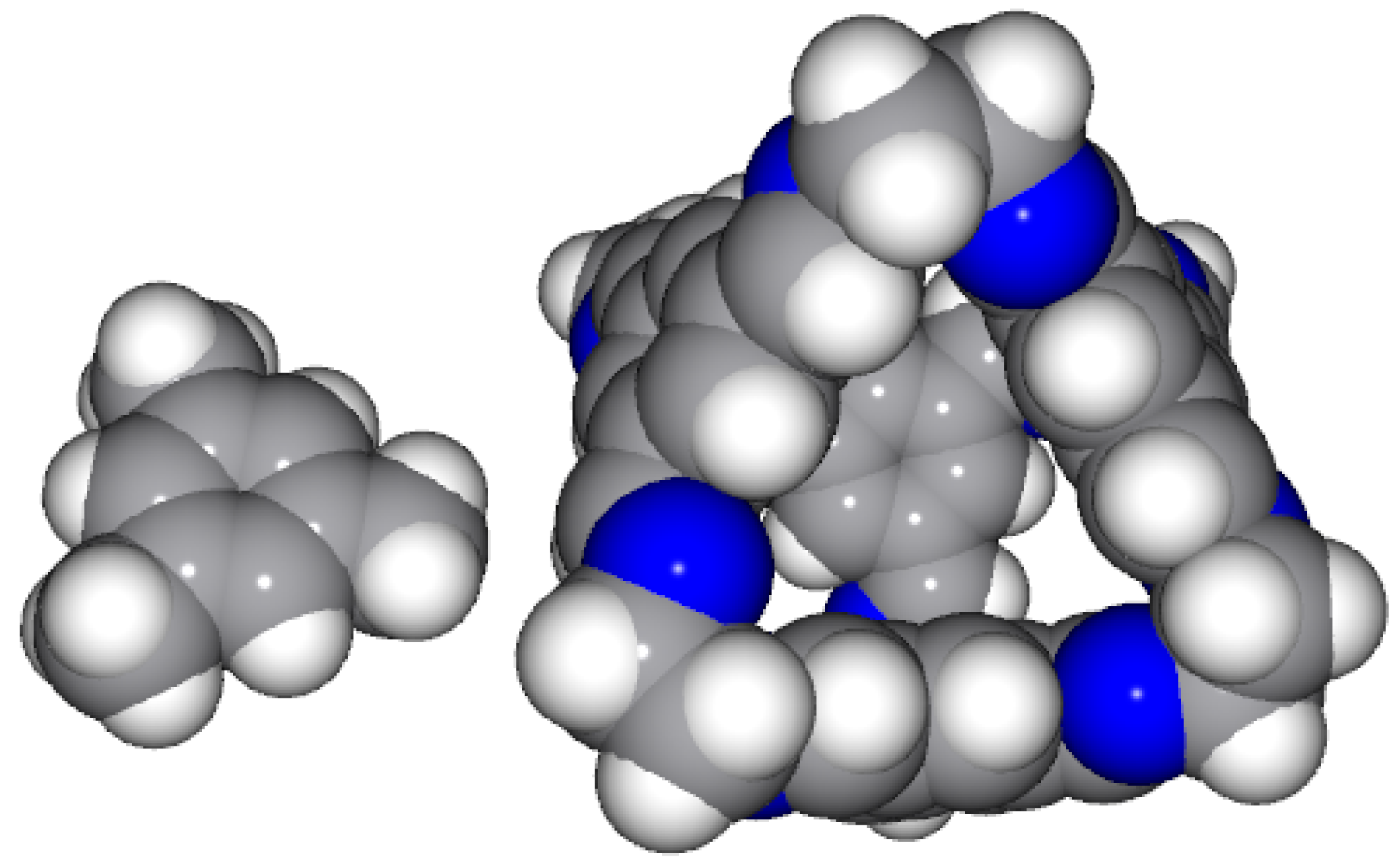}
\caption{\label{chemistry}The small molecule (a guest) can be caged by the big molecule (the host).}
\end{figure}

\section{Acknowledgements}
This work has been supported by the Knut and Alice
Wallenberg Foundation and Swedish Research Council. The authors are grateful to D. Devaurs, L. Kavraki, and O. Kravchenko for their insights into molecular caging.

\newpage
\appendix
\section{Appendix}

Recall the formulation of Proposition~\ref{prop:displacement-norm}.

\begin{quote}
	Given two unit quaternions $p,q$, the following equation holds:
    \begin{equation*}
    \D(R_{p\bar q}) = 2\sin(\rho(p,q)) \rad(\OO)
    \end{equation*}
\end{quote}

\begin{proof}
	For notational simplicity we divide both sides of the equation by $\rad(\OO)$ and define $\bar\D(R_{q}) = \frac{\D(R_q)}{\rad(\OO)}$. Which reduces the problem to proving:
    \[
	    \bar\D(R_{p\bar q}) = 2\sin(\rho(p,q))
    \]

    We proceed by reducing both sides of the equation to the same formula, starting with the left-hand-side. To do this we once again point out that we can identify quaternions with vectors in $\R^4$, and that a unit quaternion $q = \cos(\frac{\theta_q}{2}) + \sin(\frac{\theta_q}{2})(q_x i + q_y j + q_z k)$ is associated to a 3D rotation of an angle of $\theta_q$ around the axis $(q_x,q_y,q_z)$ which we will denote $w_q$.
    \begin{align*}%
	    &\bar\D(R_{p\bar q})%
	    = 2|\Im p\bar q| \\%
	    &= 2\| \cos(\frac{\theta_q}{2}) \sin(\frac{\theta_p}{2}) w_p - \cos(\frac{\theta_p}{2}) \sin(\frac{\theta_q}{2}) w_q \\%
	    &\quad- \sin(\frac{\theta_p}{2}) \sin(\frac{\theta_q}{2}) w_p \times w_q \| \\%
	      &= 2\sqrt{\| \cos(\frac{\theta_q}{2}) \sin(\frac{\theta_p}{2}) w_p - \cos(\frac{\theta_p}{2}) \sin(\frac{\theta_q}{2}) w_q \|^2}\\%
	      &\quad\overline{+ \|\sin(\frac{\theta_p}{2}) \sin(\frac{\theta_q}{2}) w_p \times w_q \|^2}%
    \end{align*}%
    Where the last equality is due to the fact that $w_p\times w_q$ is perpendicular to both $w_p$ and $w_q$, and is therefore a consequence of the Pythagorean theorem. Now recall that $\|w_p\times w_q\| = \sin(\omega_{p,q})$ where $\omega_{p,q}$ is the angle between $w_p,w_q$ and that $\langle w_p,w_q\rangle = \cos(\omega_{p,q})$, since $\|w_p\|=\|w_q\|=1$. Recall also that $\sin^2(\theta) = 1 - \cos^2(\theta)$, whence we obtain

    \begin{align*}
	    &\bar\D(R_{p\bar q})\\
	    &= 2\sqrt{\| \cos(\frac{\theta_q}{2}) \sin(\frac{\theta_p}{2}) w_p -
               \cos(\frac{\theta_p}{2}) \sin(\frac{\theta_q}{2}) w_q \|^2}\\
	       &\quad\overline{+ \sin^2(\frac{\theta_p}{2}) \sin^2(\frac{\theta_q}{2}) (1 - \langle w_p,w_q \rangle^2)}
    \end{align*}

    Furthermore let $\tilde w_p = \frac{w_q - \langle w_p,w_q\rangle w_p}{\|w_q - \langle w_p,w_q\rangle w_p\|}$ be the component of $w_q$ which is perpendicular to $w_p$, then we can rewrite $\cos(\frac{\theta_p}{2}) \sin(\frac{\theta_q}{2}) w_q$ as

    \begin{align*}
	    & \cos(\frac{\theta_p}{2}) \sin(\frac{\theta_q}{2}) w_q \\ & = \cos(\frac{\theta_p}{2}) \sin(\frac{\theta_q}{2}) (\langle w_p,w_q \rangle w_p + \langle \tilde w_p , w_q\rangle \tilde w_p )
    \end{align*}

    Substituting this into the formula, and using the pythagorean theorem to separate he $w_p$ and $\tilde w_p$ components, we can proceed with 

    \begin{align*}
	    & \bar\D(R_{p\bar q}) \\
	    &= 2\sqrt{\| (\cos(\!\frac{\theta_q}{2}\!)\! \sin(\!\frac{\theta_p}{2}\!)\!\! -\!\!
    \cos(\!\frac{\theta_p}{2}\!)\! \sin(\!\frac{\theta_q}{2}\!) \langle w_p,w_q\rangle) w_p\|^2}\\
              & \quad\overline{+\|\cos(\frac{\theta_p}{2}) \sin(\frac{\theta_q}{2}) \langle w_q,\tilde w_p\rangle \tilde w_p\|^2}\\
              & \quad\overline{+ \sin^2(\frac{\theta_p}{2}) \sin^2(\frac{\theta_q}{2}) (1 - \langle w_p,w_q \rangle^2)}
    \end{align*}

    Now we note that $\|w_q\| ,\|w_p\| , \|\tilde w_p\| = 1$, and therefore $\| \langle w_p ,w_q\rangle w_p + \langle w_q , \tilde w_p \rangle \tilde w_p \| ^ 2 = 1$ which by the Pythagorean theorem gives $\langle w_q , \tilde w_p\rangle^2 = 1 - \langle w_p,w_q\rangle^2$, which we can substitute once again.

    \begin{align*}
	    & \bar\D(R_{p\bar q}) \\
	    & = 2\sqrt{{( \cos(\frac{\theta_q}{2}) \sin(\frac{\theta_p}{2}) -
            \cos(\frac{\theta_p}{2}) \sin(\frac{\theta_q}{2}) \langle w_p,w_q\rangle )}^2}\\
               & \quad\overline{+\cos^2(\frac{\theta_p}{2}) \sin^2(\frac{\theta_q}{2}) (1 - \langle w_p,w_q \rangle^2)}\\
              & \quad\overline{+ \sin^2(\frac{\theta_p}{2}) \sin^2(\frac{\theta_q}{2}) (1 - \langle w_p,w_q \rangle^2)}\\
    \end{align*}

    Now we want to deal only with a combination of tangents, therefore we divide the term inside the square root by $\cos^2(\frac{\theta_p}{2})\cos^2(\frac{\theta_q}{2})$ yielding:

    \begin{align*}
	    & \bar\D(R_{p\bar q}) = 2|\cos(\frac{\theta_p}{2})\cos(\frac{\theta_q}{2})|\\ & \quad \sqrt{{( \tan(\frac{\theta_p}{2}) -
            \tan(\frac{\theta_q}{2}) \langle w_p,w_q\rangle )}^2}\\
               & \quad\overline{+\tan^2(\frac{\theta_q}{2}) (1 - \langle w_p,w_q \rangle^2)}\\
              & \quad\overline{+ \tan^2(\frac{\theta_p}{2}) \tan^2(\frac{\theta_q}{2}) (1 - \langle w_p,w_q \rangle^2)} \\
    \end{align*}

    Now, expanding the sqares and multiplying into all the terms under the squareroot sign, as well as eliminating terms that cancel out, results in:
    \begin{align*}
	    & \bar\D(R_{p\bar q}) = 2|\cos(\frac{\theta_p}{2})\cos(\frac{\theta_q}{2})|\\
	    &\quad \sqrt{%
            \tan^2(\frac{\theta_p}{2}) - 2\tan(\frac{\theta_p}{2}) \tan(\frac{\theta_q}{2}) \langle w_p,w_q\rangle}\\
	    & \quad\overline{+\tan^2(\frac{\theta_q}{2}) + \tan^2(\frac{\theta_p}{2}) \tan^2(\frac{\theta_q}{2}) }\\
	    &\quad\overline{-\tan^2(\frac{\theta_p}{2}) \tan^2(\frac{\theta_q}{2}) \langle w_p,w_q \rangle^2}\\
    \end{align*}

By introducing an extra $1-1$ into the square root, we can use these terms to complete products in order to simplify the equation.

    \begin{align*}
	    & \bar\D(R_{p\bar q})= 2|\cos(\frac{\theta_p}{2})\cos(\frac{\theta_q}{2})| \\
	    &\quad\sqrt{1 +
            \tan^2(\frac{\theta_p}{2}) 
               +\tan^2(\frac{\theta_q}{2}) + \tan^2(\frac{\theta_p}{2}) \tan^2(\frac{\theta_q}{2}) } \\
               & \quad\overline{-{(1 + \tan(\frac{\theta_p}{2}) \tan(\frac{\theta_q}{2}) \langle w_p,w_q \rangle)}^2}\\
               = & 2|\cos(\frac{\theta_p}{2})\cos(\frac{\theta_q}{2})|\sqrt{%
               (1 + \tan^2(\frac{\theta_p}{2}))(1+ \tan^2(\frac{\theta_q}{2}))} \\ 
               & \quad\overline{-{(1 + \tan(\frac{\theta_p}{2}) \tan(\frac{\theta_q}{2}) \langle w_p,w_q \rangle)}^2}
    \end{align*}

    Finally, recall that $1 + \tan^2(\theta) = \frac{1}{\cos^2(\theta)}$, which gives us
    \begin{align*}
	    &\bar\D(R_{p\bar q}) = 2\sqrt{ 1 }\\
	    &\quad\overline{           -\cos^2(\frac{\theta_p}{2})\cos^2(\frac{\theta_q}{2}){(1 + \tan(\frac{\theta_p}{2}) \tan(\frac{\theta_q}{2}) \langle w_p,w_q \rangle)}^2}
	\end{align*}

   Now we begin to explore the right-hand side of the equation, by noting that when $\sin(\theta)>0$ then $\sin(\theta) = |\sin(\theta)| = \sqrt{\sin^2(\theta)} = \sqrt{1-\cos^2(\theta)}$. Furthermore we note that $\cos^{-1}$ maps $[-1,1]$ to $[0,\pi]$ and particularly it maps $[0,1]$ to $[0,\pi/2]$ where the sine function is positive, therefore, we get

       \begin{align*}
	   & 2\sin(\rho(p,q)) = 2 \sin( \cos^{-1}(|\langle p,q\rangle|)) \\
           & = 2 \sqrt{1 \! - \! \cos^2(\cos^{-1}(|\langle p,q\rangle|))}\\
           & = 2 \sqrt{1 \! - \! \langle p,q\rangle^2}\\
           & = 2 \sqrt{1 \! - \! {(\cos(\!\frac{\theta_p}{2})\cos(\!\frac{\theta_q}{2}) \! + \! \sin(\!\frac{\theta_p}{2})\sin(\!\frac{\theta_q}{2})\langle w_p,w_q\rangle )}^2}\\
       \end{align*}

       And finally, we get the same formula as before:

       \begin{align*}
	       & 2\sin(\rho(p,q)) = 2 \sqrt{1} \\ &\quad\overline{- \cos^2(\!\frac{\theta_p}{2})\cos^2(\!\frac{\theta_q}{2}){(1 +\tan(\!\frac{\theta_p}{2})\tan(\!\frac{\theta_q}{2})\langle w_p,w_q\rangle )}^2}\\
       \end{align*}

       Hence concluding the proof of Proposition~\ref{prop:displacement-norm}.
\end{proof}

\end{document}